\numberwithin{equation}{section} 
\definecolor{mylila}{rgb}{0.6, 0.4, 1.0} %{0.0,0.0,0.55}
\definecolor{red}{rgb}{0.75,0.0,0.0}
\definecolor{green}{rgb}{0.0,0.5,0.0}
\definecolor{bluebell}{rgb}{0.64, 0.64, 0.82}
\definecolor{amethyst}{rgb}{0.6, 0.4, 0.8}
\definecolor{applegreen}{rgb}{0.55, 0.71, 0.0}
\definecolor{awesome}{rgb}{1.0, 0.13, 0.32}
\definecolor{caribbeangreen}{rgb}{0.0, 0.8, 0.6}
\newtheorem{theorem}{Theorem}[section]
\newtheorem{definition}[theorem]{Definition}
\newtheorem{lemma}[theorem]{Lemma}
\newtheorem{proposition}[theorem]{Proposition}
\newtheorem{example}[theorem]{Example}
\newtheorem{remark}[theorem]{Remark}
\newcommand{\mtc}{\mathcal}
\def\argmin{\mathop{\rm arg\,min}\limits}
\def\cG{{\mtc{G}}}
\def\cH{{\mtc{H}}}
\def\cO{{\mtc{O}}}
\newcounter{nbnotes}
\newcommand{\checknbnotes}{
\ifnum \thenbnotes > 0
\@latex@warning@no@line{**********************************************************************}
\@latex@warning@no@line{* The document contains \thenbnotes \space color note(s)}
\@latex@warning@no@line{**********************************************************************}
\fi}
\title{From inexact optimization to learning via gradient concentration}
\author{
Bernhard Stankewitz  \\
Humboldt University of Berlin \\
\texttt{stankebe@math.hu-berlin.de} 
\and 
Nicole M\"ucke\\
Technical University Braunschweig \\
\texttt{nicole.muecke@tu-braunschweig.de} 
\and
Lorenzo Rosasco \\
MaLGa, DIBRIS, Universit\'a di Genova\\
CBMM, MIT\\
Istituto Italiano di Tecnologia\\
\texttt{lorenzo.rosasco@unige.it} 
}
\begin{document}

\maketitle

\begin{abstract}
  Optimization in machine learning typically deals with the minimization of
  empirical objectives defined by training data.
  However, the ultimate goal of learning is to minimize the error on future data
  (test error),  for which the training data provides only  partial information.
  In this view, the optimization problems that are practically feasible are
  based on inexact quantities that are stochastic in nature.
  In this paper, we show how probabilistic results, specifically gradient
  concentration,  can be combined with results from inexact optimization to
  derive sharp test error guarantees.
  By considering unconstrained objectives we highlight the implicit
  regularization properties of optimization for learning.  
%
%
%  Optimization can be  shown to  control the  inductive bias in
%  a learning process, a property referred to as implicit, or iterative, 
%  regularization.
%  The estimator obtained iteratively minimizing the training error can
%  generalise well with no need of further penalties or constraints.
%  In this paper,  we investigate this phenomenon in the context of linear models
%  with smooth loss functions.
%  In particular, we investigate and propose a proof technique combining ideas
%  from inexact optimization and probability theory, specifically gradient
%  concentration.
%  The proof is easy to follow and allows to obtain sharp learning bounds.
%  More generally, it highlights a way to develop optimization results into
%  learning guarantees. 
\end{abstract}

%%%%%%%%%%%%%%%%%%%%%%%%%%%%%%%%%%%%%%%%%%%%%%%%%%%%%%%%%%%%%%%%%%%%%%%%%%%%%%%%
%%% Section: Introduction
%%%%%%%%%%%%%%%%%%%%%%%%%%%%%%%%%%%%%%%%%%%%%%%%%%%%%%%%%%%%%%%%%%%%%%%%%%%%%%%%
\section{Introduction}%{{{ 
\label{sec_Introduction}

Optimization plays a key role in modern machine learning, and is typically used
to define estimators by minimizing empirical objective functions
\cite{SraEtal2011MLOpt}.
These  objectives are  based on a data fit term, suitably penalized, or
constrained, to induce an inductive bias in the learning process
\cite{ShalevBen2014ML}. The idea is that the empirical objectives should provide an approximation to the error on future data (the test error) which is the quantity that one wishes to minimize in learning. 
The quality, of such an approximation error is typically deferred to a statistical analysis. In this view, optimization and statistical aspects are tackled separately. 

Recently, a new perspective has emerged  in machine learning showing that optimization itself can in fact be directly used to search for  a solution with small test error. 
Interestingly, no  explicit penalties/constraints are needed, since a bias in the search for a solution is implicitly enforced during the optimization process. 
This phenomenon has been called implicit regularization and it has been shown to possibly play a role in explaining the learning curves  observed in deep learning, see
for instance in \cite{GunasekarEtal2018Characterizing, Neyshabur2017Implicit}
and references therein.  
%\cite{srebro, telgarsky}. 
Further, implicit regularization has been advocated as a way to improve
efficiency of learning methods by tackling statistical and optimization
aspects at once
%\cite{wanwright, us}, 
\cite{RosascoVilla2015Iterative, YangEtal2019EarlyStopping,
BlanchardEtal2018SVD, CelisseWahl2021Discrepancy}. 
As it turns out,  implicit regularization is closely related to the notion of iterative regularization with a long history in inverse problems \cite{Land51}. 

The basic example of implicit regularization is gradient descent for linear
least squares, which is well known to converge to the minimum norm least squares
solution \cite{EnglEtal1996Regularization,YaoEtal2007Early}.
The learning properties of gradient descent for least squares are now quite well
understood \cite{YaoEtal2007Early, RaskuttiEtal2014Early} including the
extension to non-linear kernelized models 
\cite{BauerEtal2007Regularization, BlanchardMuecke2018Optimal}, stochastic
gradients
\cite{Dieuleveut2017Harder, DieuleveutEtal2016Large, MueckeEtal2019Beating},
accelerated methods
\cite{BlanchardKraemer2016Convergence, PaglianaRosasco2019Implicit} and distributed
approach-es 
\cite{ZhangEtal2015Divide, MueckeBlanchard2018Parallelizing, RichardsRebeschini2020Graph}.
Much less is known when other norms or loss functions are considered.
Implicit regularization biased to more general  norms have been considered for
example in \cite{Vavskevivcius2020Statistical,Vil17}.
Implicit regularization for loss functions other than the square loss have been
considered in a limited number of works. 
There is a vast literature on stochastic gradients techniques, see e.g. \cite{MueckeEtal2019Beating} and references therein, but these analyses do not apply  when (batch) training error gradients are used, which is the focus in this work.
The logistic loss functions for classification has recently been considered both
for linear and non-linear models, see for example
\cite{Soudry2018Implicit, JiTelgarsky2019Implicit}.
Implicit regularization for general convex Lipchitz loss with linear and kernel
models have been first considered in \cite{LinEtal2016Iterative} 
for subgradient methods and in
\cite{LinEtal2016Generalization} for stochastic gradient methods but only with suboptimal rates. 
Improved rates have been provided in \cite{YangEtal2019EarlyStopping} for
strongly convex losses and more recently in \cite{LeiEtal2021Generalization}
with a general but complex analysis.  A stability based approach, in the sense of \cite{BouEli02}, is  studied in \cite{ChenEtal2018Stability}.
%{\notice say something about last iterate results}  
  
In this paper, we further push this line of work considering implicit
regularization for linear models with convex, Lipschitz and smooth loss
functions based on gradient descent. Indeed, for this setting we derive sharp rates considering both the last and the average iterate. 
Our approach highlights a proof technique which is less common in learning and is directly based on a combination of optimization and statistical results.
The usual approach in learning theory is to derive optimization results for empirical objectives and then use statistical arguments to asses to which extent the empirical objectives approximate the test error that one ideally wished to minimize, see e.g.  \cite{ShalevBen2014ML}. Instead, we  view the empirical gradient iteration 
as the inexact version of the gradient iteration for the test error. This allows
to apply results from inexact optimization, see
e.g.\cite{BertsekasTsitsiklis2000Gradient, SchmidtEtal2011Inexact}, and requires
using statistical/probabilistic arguments to asses the quality of the gradient
approximations (rather than that of the objectives functions). For this latter
purpose, we utilize recent concentration of measures results for vector valued
variables, to establish gradient concentration  \cite{Foster2018Uniform}. While
the idea of combining inexact optimization and concentration results has been
considered before \cite{GorbunovEtal2020Stochastic}, here we illustrate it in a
prominent way to highlight its usefulness. Indeed, we show that this approach
leads to sharp results for a specific but important setting and we provide some simple numerical results that illustrate and corroborate our findings.
By highlighting the key ideas in the proof techniques we hope to encourage
further results combining statistics and optimization, for example considering
other forms of gradient approximation or optimization other than the basic gradient descent.

%techniques  We think that a systematic of this approach could lead to a number of developments for example considering other form gradient approximations or optimization. 
%
%The main  novelty in our study relies in the proving techniques we propose.
%Previous results for general losses are based on a reduction to empirical risk
%minimization, whereas previous results for least squares exploit the
%linearity of the gradients.
%Here, we take a different route.
%Our starting point are ideas from optimization with inexact gradients, see e.g. 
%\cite{BertsekasTsitsiklis2000Gradient, SchmidtEtal2011Inexact}, that we
%combine with a concentration of measure result for empirical and expected
%gradients \cite{Foster2018Uniform}. 
%As we prove, this allows to derive sharp learning guarantees in 
%high-probability. Theoretical results are illustrated on simulated data.
%We believe the approach we propose can  open the way to a number of novel
%combinations of optimization and probabilistic results in learning theory. 
%Our results can be contrasted to previous works  limited to considering
%stochastic gradients and analyses in expectation.  

The remainder of the paper is structured as follows:
In Section \ref{sec_LearningWithGradientMethodsAndImplicitRegularization}, we
collect some structural assumptions for our setting.
% and provide a short
%interpretation of how gradient descent acts as implicit regularization.
In Section \ref{sec_MainResultsAndDiscussion}, we formulate the  assumptions we put
on the loss function and state  and discuss the main results of the paper and 
%We relate them to the existing literature and discuss
 the novel aspects of our approach.
Section \ref{sec_FromInexactOptimisationToLearning} presents the more technical
aspects of the analysis.
In particular, we explain in detail how results from inexact optimization and
concentration of measure can be combined to come up with a new proof technique
for learning rates.
Finally, Section \ref{sec_Numerics} illustrates the key features of our
theoretical results with numerical experiments.

%}}} section Introduction (end) 

%%%%%%%%%%%%%%%%%%%%%%%%%%%%%%%%%%%%%%%%%%%%%%%%%%%%%%%%%%%%%%%%%%%%%%%%%%%%%%%%
%%% Section: Learning with gradient methods and implicit regularization
%%%%%%%%%%%%%%%%%%%%%%%%%%%%%%%%%%%%%%%%%%%%%%%%%%%%%%%%%%%%%%%%%%%%%%%%%%%%%%%%
\section{Learning with gradient methods and implicit regularization}%{{{ 
\label{sec_LearningWithGradientMethodsAndImplicitRegularization}

Let $ ( \mathcal{H}, \| \cdot \| ) $ be a real, separable Hilbert space and
$ \mathcal{Y} $ a subset of $ \mathbb{R} $. 
We consider random variables $ (X, Y) $ on a probability space $ ( \Omega,
\mathscr{F}, \mathbb{P} ) $ with values in $ \mathcal{H} \times \mathcal{Y} $
and unknown distribution $ \mathbb{P}_{(X, Y)} $.
The marginal distribution of $ X $ is denoted by $ \mathbb{P}_{X} $. 
Additionally, we make the standard assumption that $ X $ is bounded.
\begin{enumerate}
  \item [{\color{red} \textbf{(A1)}}] 
    \label{ass_Bounded}
    \textbf{{\color{red} (Bound)}:} 
    We assume $ \| X \| \le \kappa $ almost surely for some 
    $ \kappa \in [ 1, \infty ) $. 
\end{enumerate}
Based on the observation of $ n $ i.i.d. copies
$ 
  ( X_{1}, Y_{1} ), \dots ( X_{n}, Y_{n} )
$ 
of $ ( X, Y ) $, we want to learn a linear relationship between $ X $
and $ Y $ expressed as an element $ w \in \mathcal{H} $.\footnote{
  Note that this includes many settings as special instances. 
  In particular, it includes the standard setting of kernel learning, see
  Appendix A in \cite{RosascoVilla2015Iterative}. 
}
For an individual observation $ ( X, Y ) $ and the choice $ w \in \mathcal{H} $,
we suffer the loss $ \ell( Y, \langle X, w \rangle ) $, where 
$ 
  \ell: \mathcal{Y} \times \mathbb{R} \to [ 0, \infty )
$ 
is a product-measurable loss function.
Our goal is to find $ w \in \mathcal{H} $ such that that the \emph{population
risk} 
$ \mathcal{L}: \mathcal{H} \to [ 0, \infty ) $ given by
\begin{align}
  \label{eq_PopulationRisk}
    \mathcal{L}(w): 
  = 
    \mathbb{E}_{(X, Y)} [ \ell(Y, \langle X, w \rangle) ] 
  = 
    \int \ell(y, \langle x, w \rangle) \, \mathbb{P}_{( X, Y )}(d ( x, y )) 
\end{align}
is small.
The observed data represent the training set, while the population risk can be interpreted as an abstraction of the concept of the test error. 

In the following, we assume that a minimizer of $ \mathcal{L} $ in 
$ \mathcal{H} $ exists.
\begin{enumerate}
  \item [{\color{red} \textbf{(A2)}}] 
    \label{ass_Min} 
    \textbf{{\color{red} (Min)}:} 
    We assume there exists some $ w_{*} \in \mathcal{H} $ such that 
    $
      w_{*} \in \argmin_{ w \in \mathcal{H} } \mathcal{L}(w)
    $.
\end{enumerate}
Note that the $ \argmin $ is taken only over $ \mathcal{H} $ and not over all
measurable functions.
Under \hyperref[ass_Min]{{\color{red} \textbf{(Min)}}}, minimizing the
population risk is equivalent to minimizing the \emph{excess risk} 
$ 
  \mathcal{L}(w) - \mathcal{L}( w_{*} ) \ge 0
$. 

In this work, we are interested in bounding the excess risk, when our choice of
$ w $ is based on  applying \emph{gradient descent} (GD) to the \emph{empirical
risk} computed from the training data,
\begin{align}
    \widehat{ \mathcal{L} }(w): 
  = 
    \frac{1}{n} 
    \sum_{j = 1}^{n} 
    \ell(Y_{j}, \langle X_{j}, w \rangle). 
\end{align}
We consider a basic gradient iteration, which is well defined when the loss
function is differentiable in the second argument with a product-measurable
derivative 
$
  \ell': \mathcal{Y} \times \mathbb{R} \to \mathbb{R} 
$.

\begin{definition}[Gradient descent algorithm]%{{{
  \label{def_GradientDescentAlgorithm}
  \
  \begin{enumerate}
    \item Choose $ v_{0} \in \mathcal{H} $ and a sequence of step sizes 
      $ ( \gamma_{t} )_{t \ge 0} $.

    \item For $ t = 0, 1, 2, \dots $, define the GD-iteration
      \begin{align}
            v_{t+1} 
        =
            v_t - \gamma_t \nabla \widehat{ \mathcal{L} }( v_t )
        = 
            v_t 
          - 
            \frac{\gamma_t}{n} 
            \sum_{j = 1}^n 
              \ell'(Y_j , \langle X_{j}, v_{t} \rangle)
              X_j.
      \end{align}
      
    \item
%      {\color{blue} 
      For some $ T \ge 1 $, we consider both the last iterate $ v_{T} $ and
      the the averaged GD-iterate
      $
        \overline{v}_{T}: = \frac{1}{T} \sum_{t = 1}^{T} v_{t}
      $.
%      } 
  \end{enumerate}
\end{definition}%}}}

\noindent 

Here, we focus on batch gradient, so that all training points are used in each
iteration. 
Unlike with stochastic gradient methods, the gradients at different iterations
are not conditionally independent. 
Indeed, the analysis of batch gradient is quite different to that of
stochastic gradient and could be a first step towards considering minibatching
\cite{LinRosasco2017Optimal, MueckeEtal2019Beating, GorbunovEtal2020Stochastic}. 
%{\color{blue} 
%  Are GorbunovEtal doing minibatching?
%} 
%
%Instead of the \emph{full batch} formulation of the algorithm above one could
%also consider individual stochastic gradient descent.
%For this work, however, we prefer Definition \ref{def_GradientDescentAlgorithm},
%since it lends itself more immediately to the gradient concentration theory, we
%develop.
In our analysis, we always fix a constant step size 
$ 
  \gamma_{t} = \gamma > 0
$
for all $ t \ge 0 $ and   consider both the  average and last iterate. 
Both choices are common in the optimization literature \cite{SraEtal2011MLOpt}
and have also been studied in the context of learning with least squares
\cite{DieuleveutEtal2016Large, BlanchardMuecke2018Optimal,
MueckeEtal2019Beating}, see also our extended discussion in Subsection
\ref{ssec_DiscussionOfRelatedWork}.
In the following, we characterize the learning properties of the gradient
iteration in  Definition \ref{def_GradientDescentAlgorithm} in terms of the
corresponding excess risk. 
In particular, we derive learning bounds matching the best bounds for estimators obtained minimizing the penalized
empirical risk.  
Next, we  show that in the considered setting learning bounds can be derived by
studying suitable bias and variance terms controlled by the iteration number and
the step size.

%Typically, this is done \emph{explicitly} by adding a norm-penalty to the
%objective function, which induces a bias towards solutions with smaller norm.
%The same effect, however, can also be achieved \emph{implicitly}.
%When we do not let $ T \to \infty $ in Definition
%\ref{def_GradientDescentAlgorithm} but rather stop early, we also put a
%constraint on the norm of our solution simply by restricting the overall length
%of the gradient path.
%Therefore, $ T $ may be regarded as an implicit regularization parameter.
%Note that, borrowing the definition of a regularizer from the inverse problems
%literature, gradient descent learning can also be formulated as an explicit
%regularization method, see \cite{CelisseWahl2021Discrepancy},
%\cite{RosascoEtal2005} and \cite{EnglEtal1996Regularization}. 

%}}} section Learning with gradient methods and implicit regularization (end) %%

%%%%%%%%%%%%%%%%%%%%%%%%%%%%%%%%%%%%%%%%%%%%%%%%%%%%%%%%%%%%%%%%%%%%%%%%%%%%%%%%
%%% Section: Main results and discussion
%%%%%%%%%%%%%%%%%%%%%%%%%%%%%%%%%%%%%%%%%%%%%%%%%%%%%%%%%%%%%%%%%%%%%%%%%%%%%%%%
\section{Main results and discussion}%{{{ 
\label{sec_MainResultsAndDiscussion}

Before stating and discussing our main results,  we introduce and comment on the
basic assumptions needed in our analysis. 
We make the following additional assumptions on the loss function.
\begin{enumerate}

  \item [{\color{red} \textbf{(A3)}}] 
    \label{ass_Convex} 
    \textbf{{\color{red} (Conv)}:} 
    We assume $ \ell: \mathcal{Y} \times \mathbb{R} \to [ 0, \infty ) $ is convex
    in the second argument.

  \item [{\color{red} \textbf{(A4)}}] 
    \label{ass_Lipschitz} 
    \textbf{{\color{red} (Lip)}:} We assume $ \ell $ to be $ L $-Lipschitz, i.e.
    for some $ L > 0 $,
    \begin{align}
      | \ell(y, a) - \ell(y, b) | \le L | a - b | 
      \qquad
      \text{ for all } y \in \mathcal{Y}, a, b \in \mathbb{R}.
    \end{align}

  \item [{\color{red} \textbf{(A5)}}] 
    \label{ass_Smooth} 
    \textbf{{\color{red} (Smooth)}:} We assume $ \ell $ to be $ M $-smooth, i.e. 
    $ \ell $ is differentiable in the second argument with product-measurable
    derivative
    $ 
      \ell': \mathcal{Y} \times \mathbb{R} \to \mathbb{R} 
    $ 
    and for some $ M > 0 $,
    \begin{align}
      \label{eq_MSmoothness}
      | \ell'(y, a) - \ell'(y, b) | \le M | a - b | 
      \qquad
      \text{ for all } y \in \mathcal{Y}, a, b \in \mathbb{R}.
    \end{align}
    Note that Equation \eqref{eq_MSmoothness} immediately implies that
    \begin{align}
      \label{eq_MSmoothnessAsQuadraticBound}
        \ell(y, b) 
      \le 
        \ell(y, a) + \ell'(y, a) ( b - a ) + \frac{M}{2} | b - a | 
      \qquad 
      \text{ for all } y \in \mathcal{Y}, a, b \in \mathbb{R},
    \end{align}
    see e.g. Lemma 3.4 in \cite{Bubeck2015Optimization}.

\end{enumerate}
For notational convenience, we state the assumptions
\hyperref[ass_Lipschitz]{{\color{red} \textbf{(Lip)}}}
and 
\hyperref[ass_Smooth]{{\color{red} \textbf{(Smooth)}}}
globally for all $ a, b \in \mathbb{R} $. 
It should be noted, however, that this is not necessary.

\begin{remark}[Local formulation of assumptions]%{{{
  \label{rem_LocalFormulationOfAssumptions}
  In our analysis, we only apply 
  \hyperref[ass_Lipschitz]{{\color{red} \textbf{(Lip)}}}
  and 
  \hyperref[ass_Smooth]{{\color{red} \textbf{(Smooth)}}}
  for arguments of the form 
  $ 
    a = \langle v, x \rangle
  $, 
  where $ \| v \| \le R $ for $ R = \max \{ 1, 3 \| w_{*} \| \} $ and $ \|
  x \| \le \kappa $  with $ \kappa $ from \hyperref[ass_Bounded]{{\color{red}
  \textbf{(Bound)}}}.
  Therefore, all of our results also apply to loss functions which satisfy
  the above assumptions for all $ a, b \in [ - \kappa R, \kappa R ] $ for
  constants $ L $ and $ M $ potentially depending on $ \kappa $ and $ R $. 
\end{remark}%}}}

\noindent In light of Remark \ref{rem_LocalFormulationOfAssumptions} our
analysis is applicable to many widely used loss functions, see e.g. Chapter 2 in
\cite{SteinwartChristmann2008SVMs}. 

\begin{example}[Loss functions satisfying the assumptions]%{{{
  \label{expl_LossFunctionsSatisfyingTheAssumptions}
  \
  \begin{enumerate}
    \item [(a)] \textbf{(Squared loss):} 
      If $ \mathcal{Y} = [ - b, b ] $ for some $ b > 0 $, then checking first
      and second derivatives yields that the loss
      $ 
        \mathcal{Y} \times [ - \kappa R, \kappa R ] 
        \ni
        ( y, a ) \mapsto ( y - a )^{2}
      $ 
      is convex, $ L $-Lipschitz with constant 
      $
        L = 2 ( b + \kappa R )
      $ 
      and $ M $-Smooth with constant $ M = 2 $. 

    \item [(b)] \textbf{(Logistic loss for regression):} 
      If $ \mathcal{Y} = \mathbb{R} $, then, analogously, the loss
      $ 
        \mathcal{Y} \times \mathbb{R} 
        \ni 
        ( y, a ) \mapsto - \log \Big( 
                                  \frac{ 4 e^{y - a} }
                                       { ( 1 + e^{y - a} )^{2} }
                                \Big) 
      $ 
      is convex, L-Lipschitz with constant $ L = 1 $ and $ M $-smooth with
      constant $ M = 1 $. 

    \item [(c)] \textbf{(Logistic loss for classification):}
      For classification problems with $ \mathcal{Y} = \{ - 1, 1 \} $,
      analogously, the loss 
      $
        \mathcal{Y} \times \mathbb{R}
        \ni
        ( y, a ) \mapsto \log( 1 + e^{- y a} ) 
      $ 
      is convex, $ L $-Lipschitz with constant $ L = 1 $ and $ M $-Smooth with 
      constant $ M = 1 / 4 $.

    \item [(d)] \textbf{(Exponential loss):} 
      For classification problems with $ \mathcal{Y} = \{ - 1, 1 \} $,
      analogously, the loss 
      $
        \mathcal{Y} \times [ - \kappa R, \kappa R ] 
        \ni
        ( y, a ) \mapsto  e^{- y a}
      $ 
      is convex, $ L $-Lipschitz with constant $ L = e^{\kappa R} $ and $ M
      $-smooth also with $ M = e^{\kappa R} $.
  \end{enumerate}
\end{example}%}}}

Under Assumption \hyperref[ass_Smooth]{{\color{red} \textbf{(Smooth)}}}, the
empirical risk $ w \mapsto \widehat{ \mathcal{L} }(w) $ is differentiable and we
have
\begin{align}
    \nabla \widehat{ \mathcal{L} }(w) 
  = 
    \frac{1}{n}\sum_{j = 1}^{n} 
    \ell'( Y_{j}, \langle X_{j}, w \rangle ) X_{j}. 
\end{align}
With Assumptions \hyperref[ass_Bounded]{{\color{red} \textbf{(Bound)}}} and
\hyperref[ass_Lipschitz]{{\color{red} \textbf{(Lip)}}}, via dominated
convergence, the same is true for the expected risk $ w \mapsto \mathcal{L}(w)
$ and we have
\begin{align}
    \nabla \mathcal{L}(w)
  = 
    \int
      \ell'(y, \langle x, w \rangle) x 
    \, \mathbb{P}_{( X, Y )}(d ( x, y )). 
\end{align}
Further, our assumptions on the loss directly translate into properties of the
risks:
\begin{enumerate}

  \item [{\color{red} \textbf{(A3')}}]
    \label{ass_RConvex} 
    \textbf{{\color{red} (R-Conv)}:} 
    Under 
    \hyperref[ass_Convex]{\textbf{{\color{red} (Conv)}}},
    both the expected and the empirical risk are convex.

  \item [{\color{red} \textbf{(A4')}}]
    \label{ass_RLipschitz}
    \textbf{{\color{red} (R-Lip)}:} 
    Under 
    \hyperref[ass_Bounded]{\textbf{{\color{red} (Bound)}}}
    and
    \hyperref[ass_Lipschitz]{\textbf{{\color{red} (Lip)}}},
    both the population and the empirical risk are Lipschitz-continuous with
    constant $ \kappa L $. 

  \item [{\color{red} \textbf{(A5')}}]
    \label{ass_RSmooth}
    \textbf{{\color{red} (R-Smooth)}:} 
    Under 
    \hyperref[ass_Bounded]{\textbf{{\color{red} (Bound})}}
    and
    \hyperref[ass_Smooth]{\textbf{{\color{red} (Smooth)}}},
    the gradient of both the population and the empirical risk is
    Lipschitz-continuous with constant $ \kappa^{2}  M $. 

\end{enumerate}
The derivation, which is straightforward, is included in Lemma
\ref{lem_PropertiesOfTheRisks} in Appendix \ref{app_ProofsForMainResults}.

%%%%%%%%%%%%%%%%%%%%%%%%%%%%%%%%%%%%%%%%%%%%%%%%%%%%%%%%%%%%%%%%%%%%%%%%%%%%%%%%

\subsection{Formulation of main results}%{{{
\label{ssec_FormulationOfMainResults}

%{\color{blue} 

A first key result shows that, under the above assumptions, we can decompose
the excess risk for the averaged GD-iterate $ \overline{v}_{T} $ as well as
for the last iterate $ v_{T} $.  

\begin{proposition}[Decomposition of the excess risk]%{{{
  \label{prp_DecompositionOfTheExcessRisk}
  Suppose assumptions
  \hyperref[ass_Bounded]{{\color{red} \textbf{(Bound)}}},
  \hyperref[ass_Convex]{{\color{red} \textbf{(Conv)}}}
  and
  \hyperref[ass_Smooth]{{\color{red} \textbf{(Smooth)}}}
  are satisfied.
  Consider the GD-iteration from Definition \ref{def_GradientDescentAlgorithm}
  with  $T \in \mathbb{N} $ and constant step size
  $
    \gamma \le 1 / ( \kappa^{2} M )
  $ and let $ w \in \mathcal{H} $ be arbitrary. 
  \begin{enumerate}[(i)]

    \item The risk of the averaged iterate $ \overline{v}_{T} $ satisfies
    \begin{align*}
              \mathcal{L}( \overline{v}_{T} ) - \mathcal{L}(w) 
      & \le 
              \frac{1}{T} \sum_{t = 1}^{T} \mathcal{L}( v_t ) - \mathcal{L}(w) 
      \\
      & \le 
              \frac{ \| v_{0} - w \|^{2} } 
                   { 2 \gamma T } 
            + \frac{1}{T} \sum_{t = 1}^{T} 
              \langle 
                \nabla \mathcal{L}( v_{t - 1} ) 
              - 
                \nabla \widehat{ \mathcal{L} }( v_{t - 1} ), 
                v_{t} - w
              \rangle.
    \end{align*}
  
  \item [(ii)] The excess risk of the last iterate $ v_{T} $ satisfies 
    \begin{align*}
                \mathcal{L}( v_{T} ) - \mathcal{L}(w) 
      & \le 
                \frac{1}{T} \sum_{ t = 1 }^{T} 
                ( \mathcal{L}( v_{t} ) - \mathcal{L}(w) ) 
                \\
            & + 
                \sum_{ t = 1 }^{ T - 1 } 
                  \frac{1}{ t ( t + 1 ) } 
                  \sum_{ s = T - t + 1 }^{T} 
                  \langle 
                    \nabla \mathcal{L}( v_{ s - 1 } ) 
                  - \nabla \widehat{ \mathcal{L} }( v_{ s - 1 } ), 
                    v_{s} - v_{ T - t }
                  \rangle .
            \notag
    \end{align*}

  \end{enumerate}
\end{proposition}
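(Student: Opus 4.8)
The plan is to treat the empirical gradient $\nabla \widehat{\mathcal{L}}(v_t)$ as an inexact version of the population gradient $\nabla \mathcal{L}(v_t)$, so that the entire argument can be built on a single per-step inequality that is valid for \emph{every} comparator $w \in \mathcal{H}$:
\begin{equation}
  \label{eq_PerStepInexact}
    \mathcal{L}(v_{t+1}) - \mathcal{L}(w)
  \le
    \frac{1}{2\gamma} \left( \| v_t - w \|^2 - \| v_{t+1} - w \|^2 \right)
  +
    \langle \nabla \mathcal{L}(v_t) - \nabla \widehat{\mathcal{L}}(v_t), v_{t+1} - w \rangle .
\end{equation}
Both parts of the proposition then follow from summing \eqref{eq_PerStepInexact} over suitable index ranges and with suitable comparators.

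To derive \eqref{eq_PerStepInexact}, I would first invoke the descent lemma, i.e. the quadratic upper bound implied by the $\kappa^2 M$-smoothness of $\mathcal{L}$ from (R-Smooth) together with $\gamma \le 1/(\kappa^2 M)$, to write $\mathcal{L}(v_{t+1}) \le \mathcal{L}(v_t) + \langle \nabla \mathcal{L}(v_t), v_{t+1} - v_t \rangle + \tfrac{1}{2\gamma} \| v_{t+1} - v_t \|^2$, and then add the convexity inequality $\mathcal{L}(v_t) \le \mathcal{L}(w) + \langle \nabla \mathcal{L}(v_t), v_t - w \rangle$ coming from (R-Conv). After splitting $\nabla \mathcal{L}(v_t) = \nabla \widehat{\mathcal{L}}(v_t) + (\nabla \mathcal{L}(v_t) - \nabla \widehat{\mathcal{L}}(v_t))$ and using $\nabla \widehat{\mathcal{L}}(v_t) = -(v_{t+1} - v_t)/\gamma$, the polarization identity $\langle v_{t+1} - v_t, v_{t+1} - w \rangle = \tfrac12 ( \| v_{t+1} - v_t \|^2 + \| v_{t+1} - w \|^2 - \| v_t - w \|^2 )$ makes the two $\| v_{t+1} - v_t \|^2$ contributions cancel, leaving exactly \eqref{eq_PerStepInexact}.

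Part (i) is then immediate. Summing \eqref{eq_PerStepInexact} over $t = 0, \dots, T-1$, the distance terms telescope to $\| v_0 - w \|^2 - \| v_T - w \|^2 \le \| v_0 - w \|^2$, the error inner products reindex (via $t \mapsto t-1$) to $\sum_{t=1}^{T} \langle \nabla \mathcal{L}(v_{t-1}) - \nabla \widehat{\mathcal{L}}(v_{t-1}), v_t - w \rangle$, and dividing by $T$ produces the second inequality. The first inequality is Jensen's inequality applied to the convex $\mathcal{L}$, since $\overline{v}_T$ is the average of $v_1, \dots, v_T$.

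Part (ii) is where I expect the real work to lie, and I would handle it by a suffix-averaging (Shamir--Zhang-type) telescoping. Setting $S_k := \tfrac1k \sum_{t=T-k+1}^{T} \mathcal{L}(v_t)$, so that $S_1 = \mathcal{L}(v_T)$ and $S_T = \tfrac1T \sum_{t=1}^{T} \mathcal{L}(v_t)$, the identity $\mathcal{L}(v_T) = S_T + \sum_{k=1}^{T-1} (S_k - S_{k+1})$ reduces the claim to bounding the increments. A direct computation gives $S_k - S_{k+1} = \tfrac{1}{k(k+1)} \sum_{s=T-k+1}^{T} ( \mathcal{L}(v_s) - \mathcal{L}(v_{T-k}) )$, and the key move is to bound this inner sum by applying \eqref{eq_PerStepInexact} over the block $t = T-k, \dots, T-1$ with the comparator chosen to be $w = v_{T-k}$. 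With this choice the telescoped distance term equals $\tfrac{1}{2\gamma}( \| v_{T-k} - v_{T-k} \|^2 - \| v_T - v_{T-k} \|^2 ) \le 0$ and drops out, leaving only $\sum_{s=T-k+1}^{T} \langle \nabla \mathcal{L}(v_{s-1}) - \nabla \widehat{\mathcal{L}}(v_{s-1}), v_s - v_{T-k} \rangle$. Summing over $k$ (and renaming $k$ to $t$) and combining with $S_T - \mathcal{L}(w)$ yields precisely the stated bound. The main obstacle, and the point to track carefully, is the bookkeeping: the vanishing of the distance term depends critically on the comparator coinciding with the first iterate of each suffix block, and the weights $1/(t(t+1))$ must be followed through the $S_k - S_{k+1}$ algebra.
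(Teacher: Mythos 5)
Your proposal is correct and follows essentially the same route as the paper: your per-step inequality \eqref{eq_PerStepInexact} is exactly the paper's Lemma \ref{lem_InexactGradientDescentRisk} (derived the same way, via smoothness, convexity, and the polarization identity), part (i) is the same telescoping plus convexity argument, and your suffix-averaging identity with increments $S_k - S_{k+1} = \tfrac{1}{k(k+1)}\sum_{s=T-k+1}^{T}(\mathcal{L}(v_s)-\mathcal{L}(v_{T-k}))$ is precisely the paper's Lemma \ref{lem:omg-so-simple}, applied with the same comparator choice $w = v_{T-k}$ so that the distance term vanishes. No gaps.
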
%}}}

The proof of Proposition \ref{prp_DecompositionOfTheExcessRisk} can be
found in Appendix \ref{prf_DecompositionOfTheExcessRisk}.
The above decomposition is derived using ideas from inexact optimization, in
particular results studying inexact gradients see e.g.
\cite{BertsekasTsitsiklis2000Gradient, SchmidtEtal2011Inexact}. 
Indeed, our descent procedure can be regarded as one in which the population
gradients are perturbed by the gradient noise terms 
\begin{align}
        e_{t}: 
    = 
        \nabla \widehat{ \mathcal{L} }( v_{t} ) 
      - \nabla \mathcal{L}( v_{t} ), 
    \qquad t = 1, \dots, T.
\end{align}
We further develop this discussion in Section \ref{ssec_InexactGradientDescent}. 

Note that the results above apply to any $ w \in \mathcal{H} $. 
Later we will of course set $ w = w_{*} $ from Assumption
\hyperref[ass_Min]{{\color{red} \textbf{(Min)}}}.
With this choice, Proposition \ref{prp_DecompositionOfTheExcessRisk} (i) and
(ii) provide decompositions of the excess risk into a deterministic \emph{bias
part}
\begin{align}
  \frac{ \| v_{0} - w_{*} \|^{2} }{ 2 \gamma T },
\end{align}
which can be seen as an optimization error, and a stochastic
\emph{variance part}, which is an average of the terms
\begin{align}
  \langle - e_{ t - 1 }, v_{t} - w_{*} \rangle
  \quad \text{and} \quad 
  \langle - e_{ s - 1 }, v_{s} - v_{ T - t } \rangle, 
  \qquad t = 1, \dots, T, s = T - t + 1, \dots T.
\end{align}
Note that Proposition \ref{prp_DecompositionOfTheExcessRisk} (i) can be applied
to the first sum on the right-hand side in (ii).
In order to control the bias part, it is sufficient to choose $ \gamma T $ large
enough. Controlling the variance part is more subtle and requires some care.
By Cauchy-Schwarz inequality,
\begin{align}
  \label{eq_ControllingTheVariancePart}
      \langle - e_{ t - 1 }, v_{t} - w_{*} \rangle
  \le 
      \| e_{ t - 1 } \| \| v_{t} - w_{*} \|
  \quad \text{ for all } t = 1, \dots, T.
\end{align}
A similar estimate holds for the terms $ 
  \langle - e_{ s - 1 }, v_{s} - v_{ T - t } \rangle 
$, $ s = T - t + 1, \dots T $. 
This shows that in order to upper bound the excess risk of the average gradient
iteration it is sufficient to solve two problems:
\begin{enumerate}

  \item Bound the \emph{gradient noise} terms $ 
          e_{ t - 1 }
      = 
          \nabla \widehat{ \mathcal{L} }( v_{ t - 1 } ) 
        - \nabla \mathcal{L} ( v_{ t - 1 } )
    $ in norm; 

  \item Bound the \emph{gradient path} $
      ( v_{t} )_{ t \ge 0 } 
    $ in a ball around $ w_{*} $. 

\end{enumerate}
Starting from this observation, in Proposition \ref{prp_GradientConcentration},
we state a general gradient concentration result which, for fixed $ R > 0
$, allows to derive
\begin{align}
  \label{eq_grdcoionc}
      \sup_{\| v \| \le R} 
      \| \nabla \mathcal{L}(v) - \nabla \widehat{ \mathcal{L} }(v) \| 
  \le 
      20 \kappa^{2} R ( L + M ) 
      \sqrt{ \frac{ \log(4 / \delta) }{n} } 
\end{align}
with high probability in $ \delta $ when $ n $ is sufficiently large. 
If we could prove that the gradient path $ ( v_{t} )_{ t \ge 0 } $ stays bounded,
this would  allow to control the gradient noise terms.
However, the result in Equation \eqref{eq_grdcoionc}  
itself is not enough to directly derive a bound for the gradient path.
Indeed,  in  Proposition \ref{prp_BoundedGradientPath}, we show how gradient concentration can be used to  inductively prove that with
high probability $
  \| v_{t} - w_{*} \| 
$ stays bounded by $
  R = \max \{ 1, 3 \| w_{*} \| \} 
$ for $ t \le T $ sufficiently large.
Importantly, gradient concentration thereby allows to control the
generalization error of the excess risk and the deviation of the gradient path
at the same time.
This makes this proof technique particularly appealing comparative to other
approaches in the literature, see the discussion in Sections
\ref{ssec_DiscussionOfRelatedWork} and
\ref{sec_FromInexactOptimisationToLearning}. 
Taken together, the arguments above are sufficient to prove sharp rates for
the excess risk.

\begin{theorem}[Excess Risk]%{{{
  \label{thm_ExcessRisk}
  Suppose Assumptions
  \hyperref[ass_Bounded]{{\color{red} \textbf{(Bound)}}},
  \hyperref[ass_Convex]{{\color{red} \textbf{(Conv)}}},
  \hyperref[ass_Lipschitz]{{\color{red} \textbf{(Lip)}}},
  \hyperref[ass_Smooth]{{\color{red} \textbf{(Smooth)}}}
  and
  \hyperref[ass_Min]{{\color{red} \textbf{(Min)}}}
  are satisfied.
  Let $ v_{0} = 0 $, $ T \ge 3 $ and choose a constant step size
  $
    \gamma \le \min \{ 1 / ( \kappa^{2} M ), 1 \}
  $
  in the GD-iteration from Definition \ref{def_GradientDescentAlgorithm}.
  Then for any $ \delta \in ( 0, 1 ] $, such that 
  \begin{align}
    \label{eq_ExcessRisk_nLarge}
        \sqrt{n} 
    \ge 
        \max \{ 1, 90  \gamma T \kappa^{2} ( 1 + \kappa L ) ( M + L ) \} 
        \sqrt{ \log(4 / \delta) },
  \end{align}
  the averaged iterate $ \overline{v}_{T} $ and the last iterate $ v_{T} $
  satisfy with probability at least $ 1 - \delta $, 
  \begin{align*}
            \mathcal{L}( \overline{v}_{T} ) - \mathcal{L}( w_{*} ) 
    & \le 
            \frac{\| w_{*} \|^{2}}{2 \gamma T} 
          + 
            180 \max \{ 1, \| w_{*} \|^{2} \}
            \kappa^{2} ( M + L )
            \sqrt{ \frac{ \log(4 / \delta) }{n} },
    \\
            \mathcal{L}( v_{T} ) - \mathcal{L}( w_{*} ) 
    & \le 
            \frac{\| w_{*} \|^{2}}{2 \gamma T} 
          + 
            425 \max \{ 1, \| w_{*} \|^{2} \}
            \kappa^{2} ( M + L )
            \log(T) 
            \sqrt{ \frac{ \log(4 / \delta) }{n} }.
    \notag
  \end{align*}
  In particular, setting
  $ 
    \gamma T = \sqrt{n} / ( 
                            90 \kappa^{2} ( 1 + \kappa L ) ( M + L )
                            \sqrt{\log(4 / \delta)} 
                          ) 
  $
  yields
  \begin{align*}
            \mathcal{L}(\overline{v}_{T}) - \mathcal{L}(w_{*}) 
    & \le 
            225 \max \{ 1, \| w_{*} \|^{2} \}
            \kappa^{2} ( 1 + \kappa L ) ( M + L )
            \sqrt{ \frac{ \log(4 / \delta) }{n} }, 
            \\
            \mathcal{L}( v_{T} ) - \mathcal{L}( w_{*} ) 
    & \le 
            470 \max \{ 1, \| w_{*} \|^{2} \} 
            \kappa^{2} ( 1 + \kappa L ) ( M + L )
            \log(T) 
            \sqrt{ \frac{ \log(4 / \delta) }{n} }.
  \end{align*}
\end{theorem}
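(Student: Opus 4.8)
The plan is to feed the excess-risk decomposition of Proposition~\ref{prp_DecompositionOfTheExcessRisk}, taken with $w = w_*$ and $v_0 = 0$, into the two probabilistic ingredients assembled above, working throughout on a single favorable event. Concretely, I would fix $R = \max\{1, 3\|w_*\|\}$ and let $\mathcal{E}$ denote the event on which the uniform gradient concentration bound \eqref{eq_grdcoionc} holds and, simultaneously, the iterates stay confined, $\|v_t\| \le R$ for $t \le T$. By Proposition~\ref{prp_BoundedGradientPath} the event $\mathcal{E}$ has probability at least $1 - \delta$ precisely under the sample-size requirement \eqref{eq_ExcessRisk_nLarge}, and all subsequent estimates are carried out deterministically on $\mathcal{E}$.

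For the averaged iterate I use part (i) of Proposition~\ref{prp_DecompositionOfTheExcessRisk}. The bias term is exactly $\|w_*\|^2/(2\gamma T)$ and passes unchanged into the final bound. Each variance summand $\langle -e_{t-1}, v_t - w_*\rangle$ is bounded by Cauchy-Schwarz as in \eqref{eq_ControllingTheVariancePart} into the product $\|e_{t-1}\|\,\|v_t - w_*\|$. On $\mathcal{E}$ the confinement of the path controls the second factor by a multiple of $R$, while the gradient concentration bound \eqref{eq_grdcoionc}, applicable because $\|v_{t-1}\| \le R$, controls the first by $20\kappa^2 R(L+M)\sqrt{\log(4/\delta)/n}$. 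Hence each summand, and therefore their average over $t$, is of order $\kappa^2 R^2 (L+M)\sqrt{\log(4/\delta)/n}$; using $R^2 \le 9\max\{1,\|w_*\|^2\}$ produces the constant $180$.

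For the last iterate I use part (ii). Its leading sum is the averaged excess risk already handled, while the double sum is treated by the same Cauchy-Schwarz step applied to $\langle -e_{s-1}, v_s - v_{T-t}\rangle$, now using $\|v_s - v_{T-t}\| \le \|v_s - w_*\| + \|v_{T-t} - w_*\|$ to keep the difference of iterates bounded by a multiple of $R$. Since for each $t$ the inner sum ranges over the $t$ indices $s = T-t+1, \dots, T$, the weight $1/(t(t+1))$ leaves $\sum_{t=1}^{T-1} 1/(t+1) \le \log T$, which is exactly the source of the extra $\log(T)$ factor; collecting terms and using $T \ge 3$ gives the constant $425$. Finally, the explicit rates follow by balancing bias against variance: substituting $\gamma T = \sqrt{n}/(90\kappa^2(1+\kappa L)(M+L)\sqrt{\log(4/\delta)})$ converts the bias into a term of the same order $\sqrt{\log(4/\delta)/n}$, and adding the constants yields $225$ and $470$.

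The genuine obstacle is not this bookkeeping but the confinement statement of Proposition~\ref{prp_BoundedGradientPath} underlying the event $\mathcal{E}$. The difficulty is an apparent circularity: the concentration bound \eqref{eq_grdcoionc} is only uniform over the ball of radius $R$, yet to invoke it at $v_{t-1}$ one must already know that the iterate lies in this ball. Breaking the circle calls for an induction on $t$ in which, assuming the path has stayed within radius $R$ up to step $t$, one uses \eqref{eq_grdcoionc} to control the gradient noise $e_t$ and then the inexact descent recursion to show that $v_{t+1}$ remains within radius $R$ as well; the sample-size condition \eqref{eq_ExcessRisk_nLarge} is exactly what makes this inductive step close. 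Once the a priori confinement of the trajectory is secured, the rest of the argument is Cauchy-Schwarz and elementary summation.
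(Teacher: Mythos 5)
Your proposal is correct and follows essentially the same route as the paper's own proof: Proposition \ref{prp_DecompositionOfTheExcessRisk} with $w = w_*$, Cauchy--Schwarz on the noise terms, the single high-probability event from Propositions \ref{prp_GradientConcentration} and \ref{prp_BoundedGradientPath} (whose inductive resolution of the circularity you correctly identify as the real content), the bound $R^2 \le 9\max\{1,\|w_*\|^2\}$ for the constants, the summation $\sum_{t=1}^{T-1} 1/(t+1) \le \log T$ for the last-iterate $\log(T)$ factor, and the final substitution of $\gamma T$ to balance bias and variance.
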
%}}}

\noindent The proof of Theorem \ref{thm_ExcessRisk} is in Appendix
\ref{prf_ExcessRisk}. 
Here, we comment on the above result. The gradient concentration inequality
allows to derive an explicit estimate for the variance. 
As expected, the latter improves as the number of samples increases, but
interestingly it stays bounded, provided that $\gamma T$ is not too large, see
Equation \eqref{eq_ExcessRisk_nLarge}.
Optimizing the choice of $\gamma T$ leads to the final excess risk bound. 
Such an estimate is sharp   in the sense that it matches the best available
bounds for other estimation schemes based on empirical  risk minimization with $
\ell_2$ penalties, see e.g. \cite{ShalevBen2014ML, SteinwartChristmann2008SVMs}
and references therein. We note that the average and last iterates have essentially the same performance, up to constants and logarithmic terms. 

It is worth noting that a number of different choices for the stopping time $T$
and the step size $\gamma$ are possible, as long as their product stays
constant.
Assuming that $ \kappa $ from
\hyperref[ass_Bounded]{\textbf{{\color{red} (Bound)}}} is known, the user may
choose the step size $ \gamma $ a priori when $ M $ from
\hyperref[ass_Smooth]{{\color{red} \textbf{(Smooth)}}} is known, see Example
\ref{expl_LossFunctionsSatisfyingTheAssumptions} (a), (b), (c).
When $ M $ depends on the bound $ R = \max \{ 1, 3 \| w_{*} \| \} $, see Proposition
\ref{prp_BoundedGradientPath}, the choice of $ \gamma $ must be adapted to the
norm of the minimizer $ w_{*} $, see e.g. Example
\ref{expl_LossFunctionsSatisfyingTheAssumptions} (d) and the discussion in
Remark \ref{rem_LocalFormulationOfAssumptions}.  
In this sense, it is indeed the product $ \gamma T $ that plays the role of a
regularization parameter, see also Figure \ref{fig:plots} in Section
\ref{sec_Numerics}.  

The  excess risk bound in the above theorem matches the best  bound for least
squares, obtained with an ad hoc analysis
\cite{YaoEtal2007Early, RaskuttiEtal2014Early}. 
The obtained bound improves the results  obtained in \cite{LinEtal2016Iterative}
and recovers the results in \cite{LeiEtal2021Generalization} in a special case.
Indeed, these latter results are more general and allow to derive fast rates,
however this generality is payed in terms of a considerably more complex
analysis.  In particular, our analysis allows to get explicit constants and keep the step
size constant.  More importantly, the proof we consider follows a different
path, highlighting the connection to inexact optimization. 
We further develop this point of view next. 

%{\color{blue} 
%  \begin{itemize}
%    \item We need commentary on the last iterate part of the result.
%  \end{itemize}
%} 

%}}} subsection Formulation of main results (end) 

%%%%%%%%%%%%%%%%%%%%%%%%%%%%%%%%%%%%%%%%%%%%%%%%%%%%%%%%%%%%%%%%%%%%%%%%%%%%%%%%

\subsection{Discussion of related work}%{{{
\label{ssec_DiscussionOfRelatedWork}

{\bf Comparison to the classical approach.}
  In order to better locate our work in the machine learning and statistical
  literature, we compare it with the most  important related line of research.

  In particular, we contrast our approach with the one typically used to study
  learning with gradient descent and general loss functions.
  We briefly review this latter and more classical approach. 
 The following decomposition is often considered to decompose the excess
  risk at  $ v_{t} $:
  \begin{align}
    \label{eq_ClassicExcessRiskDecomposition}
        \mathcal{L}( v_{t} ) - \mathcal{L}( w_{*} ) 
    = 
        \underbrace{
          \mathcal{L}( v_{t} ) - \widehat{ \mathcal{L} }( v_{t} )
        }_{= (\text{I})}
      + \underbrace{
          \widehat{ \mathcal{L} }( v_{t} ) - \widehat{ \mathcal{L} }( w_{*} )  
        }_{= (\text{II})}
      + \underbrace{
        \widehat{ \mathcal{L} }( w_{*} ) - \mathcal{L}( w_{*} )  
      }_{= (\text{III})},
  \end{align}
  see e.g. \cite{ShalevBen2014ML, BottouBousquet2011Tradeoffs}. 
  The second term in the decomposition can be seen as an optimization error and
  treated by deterministic results from "\emph{exact}" optimization.
  The first and last terms are stochastic and are bounded using probabilistic
  tools. 
  In particular, the first term, often called generalization error, needs some
  care.  The two more common approaches are based on stability, see e.g.
  \cite{BouEli02,ChenEtal2018Stability}, or empirical process theory
  \cite{BoucheronEtal2013Concentration}, \cite{SteinwartChristmann2008SVMs}. 
  Indeed this latter  approach is the one considered  in
  \cite{LinEtal2016Iterative, LeiEtal2021Generalization}. 
  In this case,  the key quantity is the empirical process defined as, 
  \begin{align}
    \sup_{ \|v\| \le R }
    | \widehat{ \mathcal{L} }(v) - \mathcal{L}(v) |.
  \end{align}
  To study the latter, a main complication is that  the iterates norm/path needs
  be bounded almost surely,  and indeed this is a delicate point, as discussed
  in detail in Section~\ref{ssec_GradientConcentration}. 
  In our approach, gradient concentration allows to find a sharp bound on the
  gradient path {\em and at the same time} to directly derive an excess risk bound,
  avoiding the decomposition in~\eqref{eq_ClassicExcessRiskDecomposition} and
  further empirical process bounds. 

{\bf Inexact optimisation and gradient concentration.} 
  We are not the first to employ tools from inexact optimisation 
  to treat learning problems, see \cite{BalakrishnanEtal2017Guarantees} and
  \cite{YangEtal2019EarlyStopping}.
  A similar decomposition as in
  Proposition \ref{prp_DecompositionOfTheExcessRisk} together with a peeling
  argument instead of gradient concentration is used in \cite{YangEtal2019EarlyStopping}.
  The authors, however, derive a bound for a "conditional excess risk".
  More specifically, the risk is the conditional expectation, conditioned on the
  covariates, and is thus still a random quantity.
  The minimizer considered is the minimizer with respect to this random risk and
  therefore is a random quantity too.
  Additionally, their analysis requires strong convexity of the conditional risk
  with respect to the empirical norm.
  Our approach  allows to overcome these two restrictions.\\
  Also gradient concentration has been considered before,   see e.g. \cite{HollandIkeda2018Efficient, PrasadEtal2018Robust}.
In particular, in \cite{HollandIkeda2018Efficient} an analysis is developed under the  assumption that  minimization of  the risk is constrained over a
  closed, convex and bounded set $ \mathcal{W} \subset \mathbb{R}^{d} $, effectively considering an explicit regularization. 
  During their gradient iteration, a projection step is then considered to enforce such a constraint.
%  they then project the iterates of the algorithm  onto $ \mathcal{W} $.
  As a consequence the dimension $d$ and the diameter  of $ \mathcal{W} $ appear as key
  quantities that determine the error behavior of their algorithm.
  The same is essentially true for \cite{PrasadEtal2018Robust}.
  In comparison, our analysis is dimension free.  More importantly, however, we do not
 consider any constraint, hence considering implicit, rather than explicit, regularization.
%   make a boundedness assumption on the   optimisation domain.
Also,   from a technical point of view this is a key difference. As we discuss in Section \ref{ssec_GradientConcentration}   bounding the gradient path is required, in the absence of explicit constraints.
%  Doing this in an a-priori fashion by performing projected gradient descent
%  onto a bounded  is restrictive.
%  In particular, from the perspective of implicit regularisation, this
%  essentially presumes that an appropriate level of regularisation is known in
%  advance.
  The main contribution of our paper, as we see it, is to show that the
  combination of optimisation and concentration of measure techniques presented
  allow to seamlessly control the excess risk and the length of the gradient
  path at the same time, whereas in other analyses, e.g.
  \cite{LeiEtal2021Generalization}, these two tasks have to be separated and are
  much more involved.\\
  Finally, we discuss in detail the results in  \cite{GorbunovEtal2020Stochastic}, of which
  we had not been aware after finishing this work and are closely related. 
  Indeed, also in this paper   inexact optimization and  gradient concentration are combined, albeit in a different way.
  In Theorem G.1., the authors consider stochastic gradient descent for a convex
  and smooth objective function on $ \mathbb{R}^{d} $, notably also on an
  unbounded domain. 
  For their analysis, they introduce clipped versions of the stochastic
  gradients.
  They also borrow a decomposition of the excess risk from inexact optimization,
  although a different one.
  In particular, it is not straightforward that their decomposition would also
  yield results for the last gradient iteration.   
  In a second step, they then use the conditional independence of gradient
  batches and a Bernstein-type inequality for Martingale differences
  to derive concentration for several terms involving gradient noise terms.
  In comparison, instead of concentration based on individual batches, we use the
  full empirical gradients together with a uniform concentration result based on
  Rademacher complexities of Hilbert space valued function classes, see Section
  \ref{ssec_GradientConcentration}. 
  On the one hand, our setting is more general, since we consider a Hilbert
  space instead of $ \mathbb{R}^{d} $. 
  On the other hand, \cite{GorbunovEtal2020Stochastic}  are notably able to
  forgo property \hyperref[ass_RSmooth]{{\color{red} \textbf{(R-Lip)}}}, i.e.
  their gradients can be unbounded.
  This is the main aspect of their analysis.
  As a consequence their result is tailored to this setting
  and does not contain ours as a special case. 
  In particular, with property
  \hyperref[ass_RSmooth]{{\color{red} \textbf{(R-Lip)}}},
  even on $ \mathbb{R}^{d} $, our result is much sharper.
  We avoid an additional $ \log $-factor and, more importantly, we are able to
  freely choose a large, fixed step size $ \gamma > 0 $. 
  In Theorem G.1. of \cite{GorbunovEtal2020Stochastic}, the step size has to depend both on the number of iterations
  and the high probability guarantee of the result.
  Further, our results in Theorem \ref{thm_ExcessRisk} are particularly sharp
  with explicit constants and one clear regularization parameter, $
  \gamma T $, that can, in principle, be chosen via sample splitting and early
  stopping.
  Conversely, in order to control the unbounded gradients
  \cite{GorbunovEtal2020Stochastic} 
  have to introduce two additional hyperparameters:
  the gradient clipping threshold $ \lambda $ and the batch size $ m $.
  In their analysis, both of these have to be chosen in dependence of the true
  minimizer.
  In particular, the clipping threshold $ \lambda $ de facto regularizes the
  problem based on a priori knowledge of the true solution, very much in the way
  as a bounded domain would. Developing these observations is indeed an interesting venue for further research. 
  
{\bf Last iterate vs. averaged iterates convergence.} Finally, we compare our results to other high probability bounds for gradient descent. 
High probability bounds for both last iterate and (tail-)averaged gradient convergence with constant stepsize for least squares regression in Hilbert spaces are well established. 
Indeed, the former follows from \cite{BlanchardMuecke2018Optimal, LinEtal2020Optimal}
as gradient descent belongs to the broader class of spectral
regularisation methods.   Note that this also is well known in the context of inverse problems, see e.g.
  \cite{EnglEtal1996Regularization}. As observed in \cite{MueckeEtal2019Beating}, also average gradient descent can be cast and analyzed in the spectral filtering framework.
Indeed, average and last iterates can be seen to share essentially the same excess risk bound. However, the proof is heavily tailored to least squares. 
%The latter 
%has recently been identified in \cite{MueckeEtal2019Beating} as a special instance of such methods, too. Notably, both algorithms perform equally well 
%in this setting, i.e. %they they achieve a bound of $\cO(1/\gamma T)$ for sufficiently large $n$ and 
%they are minimax optimal under the same  choices for stepsize and number of iterations, without additional logarithmic factor as in Theorem  \ref{thm_ExcessRisk} 
%for the last iterate bound.        
%A similar observation was made for SGD in \cite{rakhlin2012making} with and without averaging for more general strongly convex  
%and smooth losses, where an optimal bound, albeit in expectation, of order $\cO(1/T)$ is established. In the same paper it is shown that the same result 
%can be achieved with tailaveraging for strongly convex and non-smooth losses.   
%In \cite{shamir2012averaging}, the question is raised whether the final iterate of SGD achieves the optimal rate in the non-smooth setting, too. 
%Partial progress has been made in \cite{shamir2013stochastic} where it is shown that the last iterate achieves an error bound in expectation of order 
%$\cO(\log(T)/T)$  for strongly convex losses and of $\cO(\log(T)/\sqrt{T})$ for Lipschitz losses, being worse than the above optimal bound by a factor of $\log(T)$. 
Compared to these results, for smooth losses, we establish  a high probability bound of order $\cO(1/\gamma T)$ for uniform averaging and $\cO(\log(T)/\gamma T)$ 
for last iterate GD, for any $n$ sufficiently large, with \emph{constant} stepsize, only worse by a factor of $\log(T)$. 
We note that, it was shown in  \cite{HarveyEtal2019Tight}  that the $\log(T)$ factor is in fact necessary  for Lipschitz functions for 
last iterate  SGD and GD with \emph{decaying} stepsizes. 
Indeed, the authors derive a sharp high probability bound of order $\cO(\log(T)/\sqrt{T})$ for last iterate (S)GD, while uniform averaging 
achieves a faster rate of $\cO(1/\sqrt{T})$.  
Notably,  this work even shows the stronger statement: Any convex combination of the last $k$ iterates must incur a $\log(T/k)$ factor.  
Finally, we note that \cite{LinEtal2016Iterative} derive finite sample bounds for subgradient
descent for convex losses, considering the last iterate. 
In this work, early stopping gives a suboptimal rate, with decaying stepsize and also an additional logarithmic factor. 
This vanishes under additional differentiability and smoothness for constant stepsize.

%main point: last iterate and averaged GD behave differently for convex and smooth losses, 

%\note{ 
%SGD results: maybe put some in the introduction 
%\begin{itemize}
%\item  Making the Last Iterate of SGD Information Theoretically Optimal: \cite{jain2019making}
%\item last iterate convergence of SGD for Least-Squares in the Interpolation regime: \cite{varre2021last}
%\item l.i. conv first addressed in \cite{zhang2004solving}  and \cite{shamir2013stochastic}; convergence rates of $\cO(\log(T)/\sqrt{T})$ for non-strongly convex objectives 
%and  $\cO(\log(T)/T)$ for strongly convex objectives 
%\item in \cite{moulines2011non}: smooth functions (gradient Lipschitz and variance bounded only at optimum) rate of $\cO(1/\sqrt{T})$ 
%for averaged iterate in the non-strongly convex case, for last iterate only {\notice $\cO(T^{-1/3})$}
%In the strongly convex setting, the optimal rate $\cO(1/T)$ is obtained both by the averaged and last iterates. 
%\item geometrically-decaying learning-rate: \cite{ge2019step,jain2019making} showing that final iterate of SGD performs as well as the averaged in this case
%\end{itemize}
%}

%}}} subsection Discussion of related work (end) 

%}}} section Main results and discussion (end) %%%%%%%%%%%%%%%%%%%%%%%%%%%%%%%%%

%%%%%%%%%%%%%%%%%%%%%%%%%%%%%%%%%%%%%%%%%%%%%%%%%%%%%%%%%%%%%%%%%%%%%%%%%%%%%%%%
%%% Section: From inexact optimization to learning
%%%%%%%%%%%%%%%%%%%%%%%%%%%%%%%%%%%%%%%%%%%%%%%%%%%%%%%%%%%%%%%%%%%%%%%%%%%%%%%%
\section{From inexact optimization to learning}%{{{
\label{sec_FromInexactOptimisationToLearning}

In this section, we further discuss the important elements of the proof. 
The alternative  error decomposition we presented in
Proposition~\ref{prp_DecompositionOfTheExcessRisk}  follows from taking the
point of view of optimization with inexact gradients
\cite{BertsekasTsitsiklis2000Gradient}. 
The idea is to consider an ideal GD-iteration subject to noise, i.e. 
\begin{align}
  \label{eq_InexactGradientDescent}
  v_{t + 1} =   v_{t} 
              - \gamma ( \nabla \mathcal{L}( v_{t} ) + e_{t} ) 
  \qquad t = 0, 1, 2, \dots,
\end{align}
where, the $ ( e_{t} )_{t \ge 0} $ are gradient noise terms.
In Equation \eqref{eq_InexactGradientDescent}, very general choices for $ e_{t}
$ may be considered.
% including the ones induced by stochastic gradients.
Clearly, in  our setting, we have
\begin{align}
  e_{t} =   \nabla \widehat{ \mathcal{L} }( v_{t} ) 
          - \nabla \mathcal{L}( v_{t} ) 
  \qquad t = 0, 1, 2, \dots.
\end{align}
From this perspective,  the empirical GD-iteration can be seen 
as  performing gradient descent directly on the
\emph{population} risk, however, the gradient is corrupted with noise and convergence has to be balanced out with a control of the stability of the iterates.
Next, we see how these ideas  can be applied to the learning problem.

%%%%%%%%%%%%%%%%%%%%%%%%%%%%%%%%%%%%%%%%%%%%%%%%%%%%%%%%%%%%%%%%%%%%%%%%%%%%%%%%

\subsection{Inexact gradient descent}%{{{
\label{ssec_InexactGradientDescent}

From the point of view discussed above, it becomes essential to relate both the
risk and the norm of a fixed GD-iteration to the gradient noise.
In the following, we provide two technical Lemmas, which do exactly that.
Both results could also be formulated for general gradient noise terms $ (
e_{t})_{t \ge 0} $.
For the sake of simplicity, however, we opt for the more explicit formulation in
terms of the gradients.
The proofs are based on entirely deterministic arguments and can be found in
Appendix \ref{app_ProofsForFromInexactOptimisationToLearning}. 

\begin{lemma}[Inexact gradient descent:\,Risk]%{{{
  \label{lem_InexactGradientDescentRisk}
  Suppose assumptions
  \hyperref[ass_Bounded]{{\color{red} \textbf{(Bound)}}},
  \hyperref[ass_Convex]{{\color{red} \textbf{(Conv)}}},
  and
  \hyperref[ass_Smooth]{{\color{red} \textbf{(Smooth)}}}
  are satisfied.
  Consider the GD-iteration from Definition \ref{def_GradientDescentAlgorithm}
  with constant step size
  $
    \gamma \le 1 / ( \kappa^{2} M )
  $
  and let $ w \in \mathcal{H} $. 
  Then, for any $ t \ge 1 $, the risk of the iterate $ v_{t} $ satisfies 
  \begin{align*}
        \mathcal{L}(v_{t}) - \mathcal{L}(w) 
    \le 
        \frac{1}{ 2 \gamma } 
        ( \| v_{t - 1} - w \|^{2} - \| v_{t} - w \|^{2} )
      + \langle
          \nabla \mathcal{L}( v_{t - 1} ) 
        - 
          \nabla \widehat{ \mathcal{L} }( v_{t - 1} ), 
          v_{t} - w
        \rangle.
  \end{align*}
\end{lemma}%}}}

Lemma \ref{lem_InexactGradientDescentRisk} is the key component to
obtain the decomposition of the excess risk in Proposition
\ref{prp_DecompositionOfTheExcessRisk} for the averaged GD-iteration. 
This \emph{online to batch conversion} easily follows by exploiting the
convexity of the population risk \hyperref[ass_RConvex]{{\color{red}
\textbf{(R-Conv)}}}.

The next Lemma is crucial in providing a high probability guarantee for the
boundedness of the gradient path in Proposition \ref{prp_BoundedGradientPath},
which is necessary to apply gradient concentration to the decomposition of the
excess risk in Proposition \ref{prp_DecompositionOfTheExcessRisk}. 

\begin{lemma}[Inexact gradient descent:\,Gradient path]%{{{
  \label{lem_InexactGradientDescentGradientPath}
  Suppose assumptions
  \hyperref[ass_Bounded]{{\color{red} \textbf{(Bound)}}},
  \hyperref[ass_Convex]{{\color{red} \textbf{(Conv)}}},
  \hyperref[ass_Lipschitz]{{\color{red} \textbf{(Lip)}}}, 
  \hyperref[ass_Smooth]{{\color{red} \textbf{(Smooth)}}}
  and
  \hyperref[ass_Min]{{\color{red} \textbf{(Min)}}}
  are satisfied and choose a constant step size
  $ 
    \gamma \le \min \{ 1 / ( \kappa^{2} M ) , 1 \}
  $
  in Definition \ref{def_GradientDescentAlgorithm}.
  Then, for any $ t \ge 0 $ the norm of the GD-iterate $ v_{t + 1} $ is
  recursively bounded by
  \begin{align*}
        \| v_{t + 1} - w_{*} \|^{2} 
    & 
    \le 
        \| v_{0} - w_{*} \|^{2} 
    \\ \notag 
    & \ \ \ \
      + 2 \gamma
        \sum_{s = 0}^{t} 
        \Big( 
          \langle 
            \nabla \mathcal{L}( v_{s} ) 
          - \nabla \widehat{ \mathcal{L} }( v_{s} ), 
            v_{s} - w_{*} 
          \rangle 
        + \kappa L 
          \| 
            \nabla \mathcal{L}( v_{s} ) 
          - \nabla \widehat{ \mathcal{L} }( v_{s} )
          \|
        \Big).
  \end{align*}
\end{lemma}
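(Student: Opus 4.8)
The plan is to obtain the stated bound as a telescoping sum of a single-step recursion, so the core of the argument is to show that for every $ s \ge 0 $,
\[
    \| v_{s+1} - w_{*} \|^{2}
  \le
    \| v_{s} - w_{*} \|^{2}
  + 2 \gamma \langle \nabla \mathcal{L}( v_{s} ) - \nabla \widehat{ \mathcal{L} }( v_{s} ), v_{s} - w_{*} \rangle
  + 2 \gamma \kappa L \| \nabla \mathcal{L}( v_{s} ) - \nabla \widehat{ \mathcal{L} }( v_{s} ) \|;
\]
summing this over $ s = 0, \dots, t $ then telescopes the left-hand side and produces the claim. The recursion itself is purely deterministic, so no concentration is invoked here.

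First I would expand $ \| v_{s+1} - w_{*} \|^{2} $ using the update $ v_{s+1} = v_{s} - \gamma \nabla \widehat{ \mathcal{L} }( v_{s} ) $, giving $ \| v_{s} - w_{*} \|^{2} $, a cross term $ - 2 \gamma \langle \nabla \widehat{ \mathcal{L} }( v_{s} ), v_{s} - w_{*} \rangle $, and a quadratic term $ \gamma^{2} \| \nabla \widehat{ \mathcal{L} }( v_{s} ) \|^{2} $. The idea is then to split the empirical gradient as $ \nabla \widehat{ \mathcal{L} }( v_{s} ) = \nabla \mathcal{L}( v_{s} ) + e_{s} $ so as to expose exactly the two noise contributions on the right-hand side of the target. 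In the cross term this is immediate, since $ - 2 \gamma \langle \nabla \widehat{ \mathcal{L} }( v_{s} ), v_{s} - w_{*} \rangle = - 2 \gamma \langle \nabla \mathcal{L}( v_{s} ), v_{s} - w_{*} \rangle + 2 \gamma \langle \nabla \mathcal{L}( v_{s} ) - \nabla \widehat{ \mathcal{L} }( v_{s} ), v_{s} - w_{*} \rangle $, the last summand being precisely the desired inner product.

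The key step is to absorb the remaining population-only terms. I would pair $ - 2 \gamma \langle \nabla \mathcal{L}( v_{s} ), v_{s} - w_{*} \rangle $ with $ \gamma^{2} \| \nabla \mathcal{L}( v_{s} ) \|^{2} $ through co-coercivity of the convex, $ \kappa^{2} M $-smooth population risk (a standard property of smooth convex functions, see e.g.\ \cite{Bubeck2015Optimization}); together with $ \nabla \mathcal{L}( w_{*} ) = 0 $ from \hyperref[ass_Min]{{\color{red} \textbf{(Min)}}} this gives $ \| \nabla \mathcal{L}( v_{s} ) \|^{2} \le \kappa^{2} M \langle \nabla \mathcal{L}( v_{s} ), v_{s} - w_{*} \rangle $, and since $ \gamma \kappa^{2} M \le 1 $ the quadratic piece is dominated, leaving $ - \gamma \langle \nabla \mathcal{L}( v_{s} ), v_{s} - w_{*} \rangle \le 0 $ by convexity and optimality of $ w_{*} $. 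What is left of the original quadratic term is $ \gamma^{2} ( \| \nabla \widehat{ \mathcal{L} }( v_{s} ) \|^{2} - \| \nabla \mathcal{L}( v_{s} ) \|^{2} ) $, which I would control by factoring the difference of squares: the reverse triangle inequality bounds $ | \, \| \nabla \widehat{ \mathcal{L} }( v_{s} ) \| - \| \nabla \mathcal{L}( v_{s} ) \| \, | $ by $ \| e_{s} \| $, while \hyperref[ass_RLipschitz]{{\color{red} \textbf{(R-Lip)}}} bounds each gradient norm by $ \kappa L $, so the complementary factor is at most $ 2 \kappa L $; with $ \gamma \le 1 $ this gives exactly $ 2 \gamma \kappa L \| e_{s} \| $.

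The main obstacle is precisely this quadratic term: a naive expansion of $ \| \nabla \mathcal{L}( v_{s} ) + e_{s} \|^{2} $ with separate crude bounds on $ 2 \langle \nabla \mathcal{L}( v_{s} ), e_{s} \rangle $ and $ \| e_{s} \|^{2} $ loses a constant factor and fails to reproduce the sharp coefficient $ 2 \gamma \kappa L $. The clean route is the exact recombination $ 2 \langle \nabla \mathcal{L}( v_{s} ), e_{s} \rangle + \| e_{s} \|^{2} = \| \nabla \widehat{ \mathcal{L} }( v_{s} ) \|^{2} - \| \nabla \mathcal{L}( v_{s} ) \|^{2} $ followed by the difference-of-squares estimate above. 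Beyond this, everything reduces to deterministic bookkeeping relying only on co-coercivity, the two step-size conditions $ \gamma \le 1 / ( \kappa^{2} M ) $ and $ \gamma \le 1 $, and the uniform gradient bound from \hyperref[ass_RLipschitz]{{\color{red} \textbf{(R-Lip)}}}.
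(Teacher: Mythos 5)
Your proposal is correct and follows essentially the same route as the paper's proof: expand the one-step squared distance, split off the noise cross-term, kill the population terms $-2\gamma\langle\nabla\mathcal{L}(v_s), v_s - w_*\rangle + \gamma^2\|\nabla\mathcal{L}(v_s)\|^2$ via co-coercivity with $\nabla\mathcal{L}(w_*)=0$ and $\gamma \le 1/(\kappa^2 M)$, bound the remaining difference $\gamma^2(\|\nabla\widehat{\mathcal{L}}(v_s)\|^2 - \|\nabla\mathcal{L}(v_s)\|^2)$ by $2\gamma\kappa L\|e_s\|$ using the $\kappa L$ gradient bound and $\gamma \le 1$, then telescope. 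The only cosmetic differences are that the paper bounds the population terms directly by $(\gamma^2 - 2\gamma/(\kappa^2 M))\|\nabla\mathcal{L}(v_s)\|^2 \le 0$ where you invoke convexity and optimality for the leftover half, and it writes the difference of squared norms as $\langle \nabla\widehat{\mathcal{L}}(v_s) + \nabla\mathcal{L}(v_s), e_s\rangle$ followed by Cauchy--Schwarz where you factor scalars with the reverse triangle inequality.
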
%}}}

\noindent Assuming that for some fixed $ R > 0 $, 
$ 
  \| v_{s} - w_{*} \| \le R
$ 
for all $ s \le t $, Lemma \ref{lem_InexactGradientDescentGradientPath}
guarantees that 
\begin{align}
        \| v_{t + 1} - w_{*} \|^{2} 
  \le
        \| v_{0} - w_{*} \|^{2} 
      + 2 \gamma ( R + \kappa L )
        \sum_{s = 1}^{t}
        \| 
          \nabla \mathcal{L}( v_{s} ) 
        - \nabla \widehat{ \mathcal{L} }( v_{s} )
        \|,
\end{align}
which, in combination with gradient concentration, allows for an inductive bound
on $ \| v_{t + 1} \| $. 
Summarizing, Lemma \ref{lem_InexactGradientDescentRisk} and Lemma
\ref{lem_InexactGradientDescentGradientPath}  can be regarded as tools to study
our learning problem  using  gradient concentration directly.

%}}} subsection Inexact gradient descent (end) 

%%%%%%%%%%%%%%%%%%%%%%%%%%%%%%%%%%%%%%%%%%%%%%%%%%%%%%%%%%%%%%%%%%%%%%%%%%%%%%%%

\subsection{Gradient concentration}%{{{
\label{ssec_GradientConcentration}

In this section, we discuss how the gradient concentration inequality in
Equation~\eqref{eq_grdcoionc} is derived using results in \cite{Foster2018Uniform}.
We use a gradient concentration result which is expressed in terms of the
Rademacher complexity of a function class defined by the gradients
$ w \mapsto \nabla \mathcal{L}(w) $ with 
\begin{align}
    \nabla \mathcal{L}(w) 
  = 
    \int
      \ell'(y, \langle x, w \rangle) x 
    \, \mathbb{P}_{( X, Y )}(d ( x, y )), 
  \qquad w \in \mathcal{H}. 
\end{align}
Since the gradients above are elements of the Hilbert space $ \mathcal{H} $, the
notion of Rademacher complexities has to be stated for Hilbert space-valued
function classes, see \cite{Maurer2016Vector}.

\begin{definition}[Rademacher complexities]%{{{
  \label{def_RademacherComplexities} 
  Let $ ( \mathcal{H}, \| \cdot \| ) $ be a real, separable Hilbert space.
  Further, let $ \mathcal{G} $ be a class of maps $ g: \mathcal{Z} \to
  \mathcal{H} $ and $ Z = ( Z_{1}, \dots, Z_{n} ) \in \mathcal{Z}^{n} $ be a
  vector of i.i.d. random variables. 
  We define the \emph{empirical} and \emph{population} Rademacher complexities
  of $ \mathcal{G} $ by
  \begin{align}
    \widehat{\mathcal{R}}_{n}(\mathcal{G}): 
    = \mathbb{E}_{\varepsilon} \Big[ 
        \sup_{g \in \mathcal{G}} 
        \Big\| 
          \frac{1}{n} \sum_{j = 1}^{n} 
          \varepsilon_{j} g(Z_{j})
        \Big\|
      \Big] 
    \qquad \text{ and } \qquad 
    \mathcal{R}_{n}(\mathcal{G}): 
    = \mathbb{E}_{Z} \big[ \widehat{\mathcal{R}}(\mathcal{G}) \big]
  \end{align}
  respectively, where $\varepsilon =(\varepsilon_1 , ..., \varepsilon_n) \in
  \{-1,+1\}^n$ is a vector of i.i.d. Rademacher random variables independent of
  $ Z $.
\end{definition}%}}}

\noindent In our setting, we consider
$ 
  ( Z_{1}, \dots, Z_{n} ) 
  =
  ( ( X_{1}, Y_{1} ), \dots, ( X_{n}, Y_{n} ) ) 
$. 
Fix some $ R > 0 $ and consider the scalar function class
\begin{align}
  \label{eq_ScalarFunctionClass}
          \mathcal{F}_{R}: 
  = 
          \{ 
            f_{v} = \langle \cdot, v \rangle: \| v \| \le R
          \} 
  \subset 
          L^{2}(\mathbb{P}_{X}) 
\end{align}
and more importantly, the $ \mathcal{H} $-valued, composite function class
\begin{align}
  \label{eq_CompositeFunctionClass}
    \mathcal{G}_{R}: = \nabla \ell \circ \mathcal{F}_{R}: 
  = 
    \{ 
      \mathcal{Y} \times \mathcal{H}
      \ni ( x, y ) \mapsto \ell'(y, f(x)) x:
      f \in \mathcal{F}_{R}
    \}. 
\end{align}
Under 
\hyperref[ass_Bounded]{{\color{red} \textbf{(Bound)}}}
and 
\hyperref[ass_Lipschitz]{{\color{red} \textbf{(Lip)}}}, 
we have
\begin{align}
  \label{eq_BoundOfTheCompositeFunctionClass}
      G_{R}: 
  = 
      \sup_{ g \in \mathcal{G}_{R} }
      \| g \|_{\infty} 
  = 
      \sup_{ f \in \mathcal{F}_{R} }  
      \| \ell'(Y, f(X)) X \|_{ \infty } 
  \le 
      \kappa L, 
\end{align}
where $ \| \cdot \|_{\infty} $ denotes the $ \infty $-norm on the underlying
probability space $ ( \Omega, \mathscr{F}, \mathbb{P} ) $. 

The gradient concentration result can now be formulated in terms of the
empirical Rademacher complexity of $ \mathcal{G}_{R} $. 

\begin{proposition}[Gradient concentration]%{{{
  \label{prp_GradientConcentration}
  Suppose assumption 
  \hyperref[ass_Bounded]{{\color{red} \textbf{(Bound)}}}
  \hyperref[ass_Lipschitz]{{\color{red} \textbf{(Lip)}}}
  and 
  \hyperref[ass_Smooth]{{\color{red} \textbf{(Smooth)}}}
  are satisfied and let $ R > 0 $. 
  Then, for any $ \delta > 0 $, 
  \begin{align*}
        \sup_{\| v \| \le R} 
        \| \nabla \mathcal{L}(v) - \nabla \widehat{\mathcal{L}}(v) \| 
    \le
        4 \widehat{ \mathcal{R} }_{n}( \mathcal{G}_{R} ) 
      + G_{R} \sqrt{ \frac{ 2 \log(4 / \delta) }{n}}
      + G_{R} \frac{ 4 \log(4 / \delta) }{n}
  \end{align*}
  with probability at least $ 1- \delta $, where $ G_R $ is defined in
  Equation \eqref{eq_BoundOfTheCompositeFunctionClass}.
\end{proposition}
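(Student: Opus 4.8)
The plan is to read the left-hand side as the uniform deviation of an empirical mean from its population mean for the $\mathcal{H}$-valued class $\mathcal{G}_{R}$, and then to apply a vector-valued uniform law of large numbers in the spirit of \cite{Foster2018Uniform, Maurer2016Vector}. Writing $g_{v}(x, y) = \ell'(y, \langle x, v \rangle) x$ and $Z_{j} = ( X_{j}, Y_{j} )$, assumption \textbf{(Smooth)} gives
\begin{align*}
    \nabla \mathcal{L}(v) - \nabla \widehat{\mathcal{L}}(v)
  =
    \mathbb{E}[ g_{v}(X, Y) ] - \frac{1}{n} \sum_{j = 1}^{n} g_{v}( Z_{j} ),
\end{align*}
and $g_{v} \in \mathcal{G}_{R}$ as soon as $\| v \| \le R$. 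Hence the quantity to be bounded is exactly
\begin{align*}
    \Phi(Z)
  :=
    \sup_{ g \in \mathcal{G}_{R} }
    \Big\|
      \mathbb{E}[ g(Z) ] - \frac{1}{n} \sum_{j = 1}^{n} g( Z_{j} )
    \Big\|,
\end{align*}
the supremum of a centered empirical process over a class that is uniformly norm-bounded by $G_{R}$, by Equation \eqref{eq_BoundOfTheCompositeFunctionClass} under \textbf{(Bound)} and \textbf{(Lip)}.

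From here I would proceed in the three standard steps of vector-valued uniform convergence. First, since replacing a single observation $Z_{i}$ changes each summand $g(Z_{i})$ by at most $2 G_{R}$ in norm, the map $Z \mapsto \Phi(Z)$ has bounded differences $2 G_{R} / n$; a McDiarmid (bounded-differences) inequality then concentrates $\Phi$ around $\mathbb{E}[\Phi]$ and produces the subgaussian term $G_{R} \sqrt{ 2 \log(4 / \delta) / n }$. Second, a symmetrization argument for Hilbert-space-valued classes bounds $\mathbb{E}[\Phi] \le 2 \mathcal{R}_{n}( \mathcal{G}_{R} )$ by the population Rademacher complexity, the norm sitting inside the expectation exactly as in Definition \ref{def_RademacherComplexities}. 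Third, one passes from the population complexity $\mathcal{R}_{n}( \mathcal{G}_{R} )$ to its empirical counterpart $\widehat{\mathcal{R}}_{n}( \mathcal{G}_{R} )$ by a second, variance-aware concentration step; combining the two Rademacher factors yields the coefficient $4$ in front of $\widehat{\mathcal{R}}_{n}( \mathcal{G}_{R} )$ and leaves the residual linear term $4 G_{R} \log(4 / \delta) / n$, with the failure probabilities of the separate steps pooled into the single $\log(4 / \delta)$. In practice the cleanest route is to invoke the packaged bound of \cite{Foster2018Uniform} directly, so that the work reduces to verifying its hypotheses, namely the identity above and the uniform bound $G_{R}$.

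The main obstacle is that $\mathcal{G}_{R}$ consists of $\mathcal{H}$-valued, rather than scalar, maps, so the classical scalar symmetrization and bounded-differences arguments must be replaced by their vector-valued analogues; this is precisely the content supplied by \cite{Foster2018Uniform, Maurer2016Vector}, and is the reason the Rademacher complexity in Definition \ref{def_RademacherComplexities} carries the norm inside the supremum. Notably, no contraction step is needed here: because the statement is expressed directly in terms of $\widehat{\mathcal{R}}_{n}( \mathcal{G}_{R} )$, the derivative $\ell'$ is never peeled off, and the only loss-function input is the boundedness $G_{R} \le \kappa L$ from \eqref{eq_BoundOfTheCompositeFunctionClass}. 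The remaining care is bookkeeping, tracking the constants through the two concentration steps and the symmetrization so as to land on exactly the stated coefficients $4$, $\sqrt{2}$ and $4$.
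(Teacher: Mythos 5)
Your proposal is correct and follows essentially the same route as the paper: the paper likewise applies McDiarmid's bounded-differences inequality to the real-valued supremum, then invokes the packaged bound of \cite{Foster2018Uniform} (their Lemma 4, which internally performs exactly the symmetrization and population-to-empirical Rademacher passage you describe, producing the factor $4$ and the linear $4 G_{R} \log(\cdot)/n$ term), and finishes with a union bound that pools the two failure probabilities into $\log(4/\delta)$. Your observation that no contraction step is needed here is also accurate; in the paper the contraction argument appears only later, in Lemma \ref{lem_BoundsOnTheEmpiricalRademacherComplexities}, to control $\widehat{\mathcal{R}}_{n}(\mathcal{G}_{R})$ itself.
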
%}}}

\noindent The proof of Proposition \ref{prp_GradientConcentration} is stated in
Appendix \ref{prf_GradientConcentration}. 
To apply Proposition \ref{prp_GradientConcentration}, we need
to bound $
  \widehat{ \mathcal{R} }_{n}( \mathcal{G}_{R} ) 
$.
This can be done relating the empirical Rademacher complexity of the
composite function class $ \mathcal{G}_{R} $ back to the complexity of the
scalar function class $ \mathcal{F}_{R} $. 

\begin{lemma}[Bounds on the empirical Rademacher complexities]%{{{
  \label{lem_BoundsOnTheEmpiricalRademacherComplexities}
  Fix $ R > 0 $. Then,
  \begin{enumerate}
    \item [(i)] Under
      \hyperref[ass_Bounded]{{\color{red} \textbf{(Bound)}}},
      we have
      $
          \widehat{ \mathcal{R} }( \mathcal{F}_{R} ) 
        \le
          \frac{\kappa R}{\sqrt{n}} 
      $.

    \item [(ii)] Under
      \hyperref[ass_Bounded]{{\color{red} \textbf{(Bound)}}},
      \hyperref[ass_Lipschitz]{{\color{red} \textbf{(Lip)}}}
      and 
      \hyperref[ass_Smooth]{{\color{red} \textbf{(Smooth)}}}, 
      we have
      \begin{align*}
            \widehat{ \mathcal{R} }( \mathcal{G}_{R} ) 
        \le 
            2 \sqrt{2} 
            \Big( 
              \frac{ \kappa L }{ \sqrt{n} } 
            + \kappa M
              \widehat{ \mathcal{R} }( \mathcal{F}_{R} )
            \Big) 
        \le 
            \frac{ 2 \sqrt{2} ( \kappa L + \kappa^{2} M R ) }
                 { \sqrt{n} }.
      \end{align*}
  \end{enumerate}
\end{lemma}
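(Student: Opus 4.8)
The plan is to treat the two parts separately, deriving (i) by a direct computation and (ii) by a vector-valued contraction argument that reduces the complexity of the composite class $\mathcal{G}_{R}$ to that of the linear class $\mathcal{F}_{R}$.

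For (i), I would start from the definition of $\widehat{\mathcal{R}}(\mathcal{F}_{R})$ and use linearity of the inner product to write $\sum_{j} \varepsilon_{j} \langle X_{j}, v \rangle = \langle \sum_{j} \varepsilon_{j} X_{j}, v \rangle$. Cauchy--Schwarz together with $\|v\| \le R$ then bounds the supremum over $v$ by $R \, \| \sum_{j} \varepsilon_{j} X_{j} \|$. Taking the Rademacher expectation, Jensen's inequality passes to the second moment, and the orthogonality relation $\mathbb{E}_{\varepsilon}[\varepsilon_{i}\varepsilon_{j}] = \delta_{ij}$ collapses the double sum to $\sum_{j}\|X_{j}\|^{2} \le n\kappa^{2}$ by \textbf{(Bound)}. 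Dividing by $n$ gives $\widehat{\mathcal{R}}(\mathcal{F}_{R}) \le \kappa R/\sqrt{n}$. This part is routine.

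For (ii), the key observation is that each summand in $\mathcal{G}_{R}$ is the scalar feature $f_{v}(x) = \langle x, v\rangle$ post-composed with the map $\Phi_{(x,y)} \colon \mathbb{R} \to \mathcal{H}$, $a \mapsto \ell'(y, a)\,x$. Under \textbf{(Smooth)} and \textbf{(Bound)} this map is $\kappa M$-Lipschitz, since $\|\Phi_{(x,y)}(a) - \Phi_{(x,y)}(b)\| = |\ell'(y,a) - \ell'(y,b)|\,\|x\| \le \kappa M |a-b|$, and under \textbf{(Lip)} its value at zero satisfies $\|\Phi_{(x,y)}(0)\| = |\ell'(y,0)|\,\|x\| \le \kappa L$. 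I would then split the Rademacher sum by centering $\Phi$ at $a=0$: the constant part $\tfrac1n\sum_{j} \varepsilon_{j} \ell'(Y_{j},0)X_{j}$ does not depend on $v$ and is bounded, exactly as in (i) via Jensen and orthogonality, by $\kappa L/\sqrt{n}$, while the centered part $a \mapsto (\ell'(y,a)-\ell'(y,0))\,x$ vanishes at the origin and remains $\kappa M$-Lipschitz. To the latter I would apply the vector-valued contraction inequality for Hilbert-space valued function classes from \cite{Maurer2016Vector}, which strips the Lipschitz post-composition at the cost of a factor proportional to $\kappa M$ and reduces the $\mathcal{H}$-valued complexity back to $\widehat{\mathcal{R}}(\mathcal{F}_{R})$. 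Collecting the numerical constants into the common factor $2\sqrt{2}$ yields the first inequality, and substituting the bound from (i) together with $\kappa M \cdot \kappa R/\sqrt{n} = \kappa^{2} M R/\sqrt{n}$ gives the second.

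The main obstacle is the vector-valued contraction step: unlike the classical scalar Ledoux--Talagrand contraction, the post-composition here takes values in the Hilbert space $\mathcal{H}$ and the maps $\Phi_{(x,y)}$ do not vanish at the origin, so one must both invoke the $\mathcal{H}$-valued version of the contraction principle (the $\sqrt{2}$ in the constant is characteristic of this inequality) and handle the offset $\ell'(y,0)\,x$ through the separate centering argument above. Everything else reduces to Cauchy--Schwarz and Jensen bookkeeping.
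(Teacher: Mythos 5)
Your part (i) is correct and is the standard argument. The gap is in part (ii), precisely at the step you yourself identify as the main obstacle: the ``vector-valued contraction inequality for Hilbert-space valued function classes'' that you invoke from \cite{Maurer2016Vector} does not exist there in the form you need. Maurer's inequality goes in the \emph{opposite} composition order: it bounds $\mathbb{E}_{\varepsilon} \sup_{f} \sum_{j} \varepsilon_{j} h_{j}(f(x_{j}))$ for \emph{scalar-valued} Lipschitz maps $h_{j}$ applied to a (possibly vector-valued) function class. What you need is to strip an $\mathcal{H}$-valued Lipschitz post-composition $\Phi_{(X_{j},Y_{j})}\colon \mathbb{R} \to \mathcal{H}$ inside the norm-based complexity $\mathbb{E}_{\varepsilon} \sup_{\|v\| \le R} \| \frac{1}{n} \sum_{j} \varepsilon_{j} \Phi_{(X_{j},Y_{j})}(\langle X_{j}, v\rangle) \|$, and these are genuinely different statements. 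To access the norm one must dualize, $\|w\| = \sup_{\|u\|=1} \langle u, w \rangle$, and then each summand $(\ell'(Y_{j}, \langle X_{j}, v\rangle) - \ell'(Y_{j},0)) \langle X_{j}, u \rangle$ depends \emph{jointly} on the index $v$ and the dual direction $u$. Scalar Talagrand contraction in $v$ is valid for each fixed $u$, but the supremum over $u$ sits inside the expectation, so these fixed-$u$ bounds cannot be combined into a uniform one; your proposal never addresses this, and it is exactly the crux of the lemma.

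For comparison, the paper's proof dualizes \emph{first} and then applies Theorem 2 of \cite{Maurer2016Vector} to the class indexed by pairs $(f,u) \in \mathcal{F}_{R} \times \{\|u\| \le 1\}$, after verifying that $\psi_{j}(f,u) = \ell'(Y_{j}, f(X_{j})) \langle X_{j}, u \rangle$ is $2$-Lipschitz with respect to the $\mathbb{R}^{2}$-valued representation $\phi_{j}(f,u) = ( L \langle X_{j}, u \rangle, \, \kappa \ell'(Y_{j}, f(X_{j})) )$. The point of this representation is that each of its two components depends on only \emph{one} of the two indices, so the product structure is decoupled: the resulting complexity splits into a term in $u$ alone (handled by your part-(i) computation, giving $2\sqrt{2}\kappa L/\sqrt{n}$) and a term in $f$ alone (handled by ordinary scalar contraction using the $M$-Lipschitzness of $\ell'$, giving $2\sqrt{2}\kappa M \widehat{\mathcal{R}}(\mathcal{F}_{R})$). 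Incidentally, your centering at $a = 0$ is harmless but unnecessary: Rademacher averages without absolute values are invariant under constant shifts, and the offset $\ell'(Y_{j},0)X_{j}$ is absorbed automatically in the paper's route. If you want to salvage your decomposition, you can run the dualization-plus-two-dimensional-representation argument on the centered class, but the centered part then comes out as $2\kappa M \widehat{\mathcal{R}}(\mathcal{F}_{R}) + 2\kappa^{2} M R/\sqrt{n}$ (the bound $|\ell'(Y_{j}, \langle X_{j}, v\rangle) - \ell'(Y_{j},0)| \le \kappa M R$ multiplies the complexity of the dual unit ball), which recovers the lemma's final bound up to constants but not the intermediate inequality stated in terms of $\widehat{\mathcal{R}}(\mathcal{F}_{R})$ alone.
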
%}}}

\noindent Note that since the bounds in Lemma
\ref{lem_BoundsOnTheEmpiricalRademacherComplexities} do not depend on the sample 
$ 
  ( X_{1}, Y_{1} ), \dots, ( X_{n}, Y_{n} )
$, 
they also hold for the population Rademacher complexities.
Lemma \ref{lem_BoundsOnTheEmpiricalRademacherComplexities} (i) is a classic
result, which we restate for completeness.
Lemma \ref{lem_BoundsOnTheEmpiricalRademacherComplexities} (ii) is more involved
and requires combining a vector-contraction inequality from
\cite{Maurer2016Vector} with additional more classical contraction arguments to
disentangle the concatenation in the function class $ \mathcal{G}_{R} $. 
The proof of Lemma \ref{lem_BoundsOnTheEmpiricalRademacherComplexities} is
stated in Appendix \ref{prf_BoundsOnTheEmpiricalRademacherComplexities}.
Note that the arguments for both Proposition \ref{prp_GradientConcentration} and
Lemma \ref{lem_BoundsOnTheEmpiricalRademacherComplexities} are essentially
contained in \cite{Foster2018Uniform}. 
Here, we provide a self-contained derivation for our setting.

Together with Lemma \ref{lem_InexactGradientDescentGradientPath}, the gradient
concentration result provides an immediate high probability guarantee for the
gradient path not to diverge too far from the minimizer $ w_{*} $. 

\begin{proposition}[Bounded gradient path]%{{{
  \label{prp_BoundedGradientPath}
  Suppose assumptions
  \hyperref[ass_Bounded]{{\color{red} \textbf{(Bound)}}},
  \hyperref[ass_Convex]{{\color{red} \textbf{(Conv)}}},
  \hyperref[ass_Lipschitz]{{\color{red} \textbf{(Lip)}}},
  \hyperref[ass_Smooth]{{\color{red} \textbf{(Smooth)}}}
  and
  \hyperref[ass_Min]{{\color{red} \textbf{(Min)}}}
  are satisfied, set $ v_{0} = 0 $ and choose a constant step size 
  $ 
    \gamma \le \min \{ 1 / ( \kappa^{2} M ), 1 \} 
  $
  in Definition \ref{def_GradientDescentAlgorithm}. 
  Fix $ \delta \in (0, 1] $ such that 
  \begin{align}
    \label{eq_BoundedGradientPath_nLarge}
        \sqrt{n} 
    \ge 
        \max \{ 1, 90  \gamma T \kappa^{2} ( 1 + \kappa L ) ( M + L ) \} 
        \sqrt{ \log(4 / \delta) } 
  \end{align}
  and 
  $
    R = \max\{ 1, 3 \| w_{*} \| \} 
  $.
  Then, on the gradient concentration event from Proposition
  \ref{prp_GradientConcentration} with
  probability at least $ 1 - \delta $ for the above choice of $ R $, we have 
  \begin{align*}
      \| v_{t} \| \le R
    \qquad \text{ and } \qquad 
      \| v_{t} - w_{*} \| \le \frac{2 R}{3}
    \qquad \text{ for all } t = 1, \dots, T.
  \end{align*}
\end{proposition}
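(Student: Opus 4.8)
\emph{Proof plan.} The statement couples two quantities that appear to depend on one another: to invoke the uniform gradient concentration bound \eqref{eq_grdcoionc} at an iterate $v_s$ one must already know $\|v_s\| \le R$, yet it is precisely concentration that we want to use to keep the iterates bounded. My plan is to break this circularity by an induction on the iteration index, carried out \emph{deterministically} on the gradient concentration event, so that the only probabilistic input is a single application of Proposition \ref{prp_GradientConcentration}. The key design choice will be to run the induction with the slightly stronger hypothesis $\|v_s - w_*\| \le 2R/3$ rather than $\|v_s - w_*\| \le R$: via $\|w_*\| \le R/3$ (which holds since $R = \max\{1, 3\|w_*\|\}$) the former implies $\|v_s\| \le R$, exactly the radius for which \eqref{eq_grdcoionc} is available, while leaving a factor of $2/3$ of slack that the sample-size condition \eqref{eq_BoundedGradientPath_nLarge} is tailored to absorb.

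First I would fix the concentration event. Combining Proposition \ref{prp_GradientConcentration} with Lemma \ref{lem_BoundsOnTheEmpiricalRademacherComplexities}~(ii) and the bound $G_R \le \kappa L$, and using $\kappa \ge 1$, $R \ge 1$ and $\log(4/\delta) > 1$ to fold the lower-order terms into the leading one, gives that on an event of probability at least $1 - \delta$,
\[
  \sup_{\|v\| \le R}\|\nabla\mathcal{L}(v) - \nabla\widehat{\mathcal{L}}(v)\| \le \varepsilon_n, \qquad \varepsilon_n := 20\,\kappa^2 R(L+M)\sqrt{\frac{\log(4/\delta)}{n}},
\]
which is Equation \eqref{eq_grdcoionc}. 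All subsequent steps take place on this event.

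Next I would set up the induction on $t$ with hypothesis $H(t)$: $\|v_s - w_*\| \le 2R/3$ for every $0 \le s \le t$. The base case $t=0$ is immediate, since $v_0 = 0$ gives $\|v_0 - w_*\| = \|w_*\| \le R/3 \le 2R/3$. For the inductive step I would assume $H(t)$; then $\|v_s\| \le \|v_s - w_*\| + \|w_*\| \le 2R/3 + R/3 = R$ for all $s \le t$, so $\varepsilon_n$ bounds $\|\nabla\mathcal{L}(v_s) - \nabla\widehat{\mathcal{L}}(v_s)\|$ at each such $s$. Feeding this, together with the Cauchy--Schwarz estimate $\langle \nabla\mathcal{L}(v_s) - \nabla\widehat{\mathcal{L}}(v_s), v_s - w_* \rangle \le \varepsilon_n\,\|v_s - w_*\| \le \varepsilon_n R$, into the recursive bound of Lemma \ref{lem_InexactGradientDescentGradientPath} (in the simplified form displayed just after it), and noting the sum has at most $T$ terms, I obtain
\[
  \|v_{t+1} - w_*\|^2 \le \|w_*\|^2 + 2\gamma T\,\varepsilon_n\,(R + \kappa L) \le \frac{R^2}{9} + 2\gamma T\,\varepsilon_n\,R(1 + \kappa L),
\]
using $\|w_*\|^2 \le R^2/9$ and $R + \kappa L \le R(1 + \kappa L)$ (valid since $R \ge 1$).

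It then remains to verify that the right-hand side is at most $(2R/3)^2 = 4R^2/9$, i.e. that the second term does not exceed $R^2/3$. Substituting the definition of $\varepsilon_n$ and dividing by $R^2$, this reduces to a bound of the form $c\,\gamma T\,\kappa^2(1 + \kappa L)(M + L)\sqrt{\log(4/\delta)/n} \le 1$ with an explicit numerical constant $c$, which is exactly what \eqref{eq_BoundedGradientPath_nLarge} provides (matching the stated constant $90$ requires tracking the concentration estimate slightly more sharply than the rounded value $20$ in \eqref{eq_grdcoionc}). This establishes $H(t+1)$ and closes the induction, yielding $\|v_t - w_*\| \le 2R/3$ and hence $\|v_t\| \le R$ for all $t = 1, \dots, T$. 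I expect the only genuine obstacle to be the self-referential structure flagged at the outset; once it is resolved by choosing the induction constant $2R/3$ so that the concentration radius $R$ is never exceeded, the remainder is bookkeeping of constants against \eqref{eq_BoundedGradientPath_nLarge}.
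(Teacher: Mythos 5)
Your proposal is correct and follows essentially the same route as the paper's proof: induction with hypothesis $\| v_{s} - w_{*} \| \le 2R/3$ (so that $\| v_{s} \| \le R$ and the concentration radius is never exceeded), Lemma \ref{lem_InexactGradientDescentGradientPath} plus Cauchy--Schwarz, and the concentration bound from Proposition \ref{prp_GradientConcentration} combined with Lemma \ref{lem_BoundsOnTheEmpiricalRademacherComplexities}~(ii), closed against the sample-size condition \eqref{eq_BoundedGradientPath_nLarge}. Your final bookkeeping is in fact slightly more careful than the paper's: you correctly require the variance term to be at most $R^{2}/3$, whereas the paper's proof only enforces $4R^{2}/9$ (which is what the stated constant $90$ delivers with the rounded factor $20$, and strictly speaking closes the induction only at $\sqrt{5}\,R/3$ rather than $2R/3$) --- precisely the constant tension you flag at the end.
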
%}}}

\noindent 
The proof of Proposition \ref{prp_BoundedGradientPath} is stated in Appendix 
\ref{prf_BoundedGradientPath}. 
In a learning setting, bounding the gradient path is essential to the
analysis of gradient descent based estimation procedures.
Either one has to guarantee its boundedness a priori, e.g. by projecting back
onto a ball of known radius $ R > 0 $ or making highly restrictive additional
assumptions, see \cite{LeiTang2018Mirror}, or one has to make usually involved
arguments to guarantee its boundedness up to a sufficiently large iteration
number, see e.g. \cite{LeiEtal2021Generalization} and
\cite{YangEtal2019EarlyStopping}. 
Our numerical illustration in Figure \ref{fig_NumericalIllustrations} shows that
from a practical perspective, such a boundedness result is indeed necessary to
control the variance. 
Additionally, if the boundedness of the gradient path was already controlled by
the optimization procedure for arbitrarily large iterations $ T $, then the
decomposition in Proposition \ref{prp_DecompositionOfTheExcessRisk} together
with our gradient concentration result in Proposition
\ref{prp_GradientConcentration} would guarantee that for $ T \to \infty $, the
deterministic bias part 
$ 
  \| w_{*} \|^{2} / ( 2 \gamma T )
$ 
vanishes completely, while the stochastic variance part
\begin{align}
  \frac{1}{T} \sum_{t = 1}^{T} 
  \langle 
    \nabla \mathcal{L}( v_{t - 1} ) 
  - \nabla \widehat{ \mathcal{L} }( v_{t - 1} ),
    v_{t} - w_{*} 
  \rangle
\end{align}
would remain of order
$ 
  \sqrt{\log(4 / \delta) / n}
$
independently of $ T $.
This would suggest that for large $ T $ there is no tradeoff between reducing
the bias of the estimation method and its variance anymore, which, in that form,
should be surprising for  learning, see discussion in
\cite{DerumignySchmidtHieber2020LowerBounds}. 
From this perspective, in order to analyze gradient descent for learning, it
seems needed to establish a result like Proposition
\ref{prp_BoundedGradientPath}.  

We compare our result in Proposition \ref{prp_BoundedGradientPath} with the
corresponding result in \cite{LeiEtal2021Generalization}, which is the most
recent in the literature.
Under the \emph{self-boundedness} assumption 
\begin{align}
  | \ell'(y, a) |^{2} \lesssim \ell(y, a) + 1 
  \qquad \text{ for all } y, a \in \mathbb{R}, 
\end{align}
they relate the stochastic gradient descent iteration $ v_{t} $ to the Tikhonov regularizer 
$ w_{\lambda} $, whose norm can be controlled and obtain a uniform bound over 
$ t = 1, \dots, T $ of the form
\begin{align}
  \label{eq_BoundedGradientPathInLeiEtal}
            \| v_{t + 1} \|^{2} 
  \lesssim 
            \sum_{s = 1}^{t} 
            \gamma_{s} 
            \max 
            \{ 
              0,
              \widehat{ \mathcal{L} }( w_{\lambda} )
            - \widehat{ \mathcal{L} }( v_{s} )
            \} 
          + \log( 2 T / \delta ) ( \| w_{\lambda} \|^{2} + 1 )
\end{align}
with high probability in $ \delta $. 
Later, the risk quantities in Equation \eqref{eq_BoundedGradientPathInLeiEtal}
are related to the approximation error of a kernel space, which guarantees that
the stochastic gradient path is sufficiently bounded.
For the bound in Equation \eqref{eq_BoundedGradientPathInLeiEtal},
\cite{LeiEtal2021Generalization} have to choose a decaying sequence of step sizes
$ \gamma_{t} $ with 
$
  \sum_{t = 1}^{T} \gamma_{t}^{2} < \infty
$.
In comparison, the result in Proposition \ref{prp_BoundedGradientPath} allows
for a fixed step size $ \gamma > 0 $.
Since sharp rates essentially require that $ \sum_{t = 0}^{T} \gamma_{t}
$ is of order $ \sqrt{n} $, we may therefore stop the algorithm earlier.
In this regard, our result is a little sharper.
At the same time, the result in \cite{LeiEtal2021Generalization} is more
general.
In fact, under a capacity condition, the authors adapt the bound in Equation
\eqref{eq_BoundedGradientPathInLeiEtal} to allow for fast rates.
However, both the proof of Equation \eqref{eq_BoundedGradientPathInLeiEtal} and
its adaptation to the capacity dependent setting are very involved and quite
technical.
In comparison, Proposition \ref{prp_BoundedGradientPath} is an immediate
corollary of Proposition \ref{prp_GradientConcentration}. 
In particular, if under additional assumptions, a sharper concentration
result for the gradients is possible, our proof technique would immediately
translate this to the bound on the gradient path that is needed to guarantee
this sharper rate for the excess risk. Indeed, we think these ideas can be fruitfully developed
to get new improved results.
%We therefore believe that it is important to highlight and further investigate
%such proof techniques.

%}}} subsection Gradient concentration (end) 

%}}} section From inexact optimization to learning (end) %%%%%%%%%%%%%%%%%%%%%%% 

%%%%%%%%%%%%%%%%%%%%%%%%%%%%%%%%%%%%%%%%%%%%%%%%%%%%%%%%%%%%%%%%%%%%%%%%%%%%%%%%
%%% Section: Numerics
%%%%%%%%%%%%%%%%%%%%%%%%%%%%%%%%%%%%%%%%%%%%%%%%%%%%%%%%%%%%%%%%%%%%%%%%%%%%%%%%
\section{Numerics}%{{{
\label{sec_Numerics}

In this section, we provide some empirical illustration to the effects described
in Section \ref{sec_MainResultsAndDiscussion} and Section
\ref{sec_FromInexactOptimisationToLearning}.
In particular, we choose the logistic loss for regression from Example
\ref{expl_LossFunctionsSatisfyingTheAssumptions} (b) and concentrate on two
aspects: The (un)bounded gradient path for the averaged iterates and the
interplay between step size and stopping time.
Our experiments are conducted on synthetic data with $ d = 100 $ dimensions,
generated as follows:
We set the covariance matrix $ \Sigma \in \mathbb{R}^{ d \times d }$ as a
diagonal matrix with entries $ \Sigma_{ j j } = j^{-2} $, $ j = 1, \dots, d $
and choose $ w_{*} = \Sigma e $, with 
$ e = ( 1, \dots, 1 )^{ \top } \in \mathbb{R}^{d} $.
We generate $ n_{ \text{train} } =10,000 $ training data, where the covariates $
X_{j} $ are drawn from a Gaussian distribution with zero mean and covariance $
\Sigma $.
For $ j = 1, \dots ,n_{ \text{train} }$, the labels follow the model $ 
  Y_{j} = \langle X_{j}, w_{ * } \rangle + \varepsilon_{ j }
$ with 
$ \varepsilon_{ j } $ being standard Gaussian noise.
Each experiment is repeated $ 100 $ times and we report the average.  

Our first experiment illustrates the behavior of the path 
$ t \mapsto || v_t - w_* || $ for $ t \in \{ 1, ...., 1000 \}$ and 
$ \gamma = 1 $. 
As Proposition \ref{prp_BoundedGradientPath} suggest, this path becomes
unbounded if the number of iterations grows large. 

In a second experiment, 
we choose a grid of step sizes $ \gamma \in \{2, ...., 10\} $ and stopping times 
$T \in \{1, ..., 1000\}$ and report the average excess test 
risk with $n_{ \text{test} } = n_{ \text{train} } /3 $ test data.
The result is presented in the right plot in Figure \ref{fig:plots}.  As Theorem
\ref{thm_ExcessRisk} predicts, for fixed $n_{ \text{train} }$, the performance
of averaged GD remains roughly constant as $\gamma \cdot T$ remains constant.  

\begin{figure}[h]%{{{
  \label{fig_NumericalIllustrations} 
  \centering
  \includegraphics[width=0.44\textwidth]{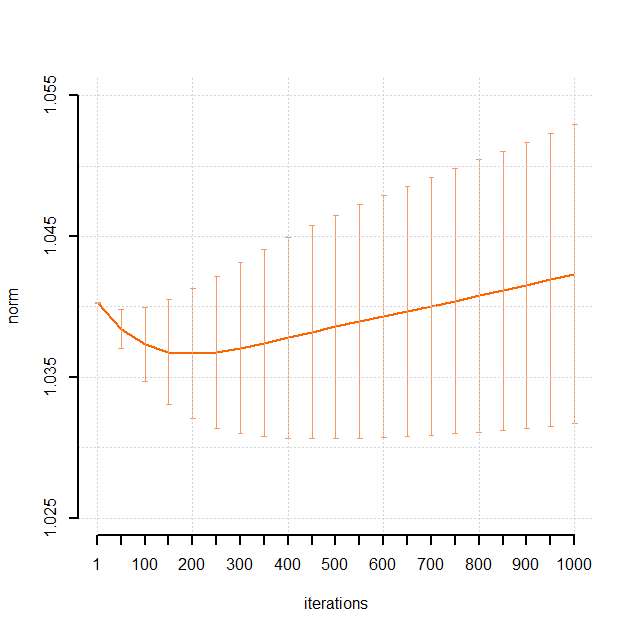} 
  \includegraphics[width=0.44\textwidth]{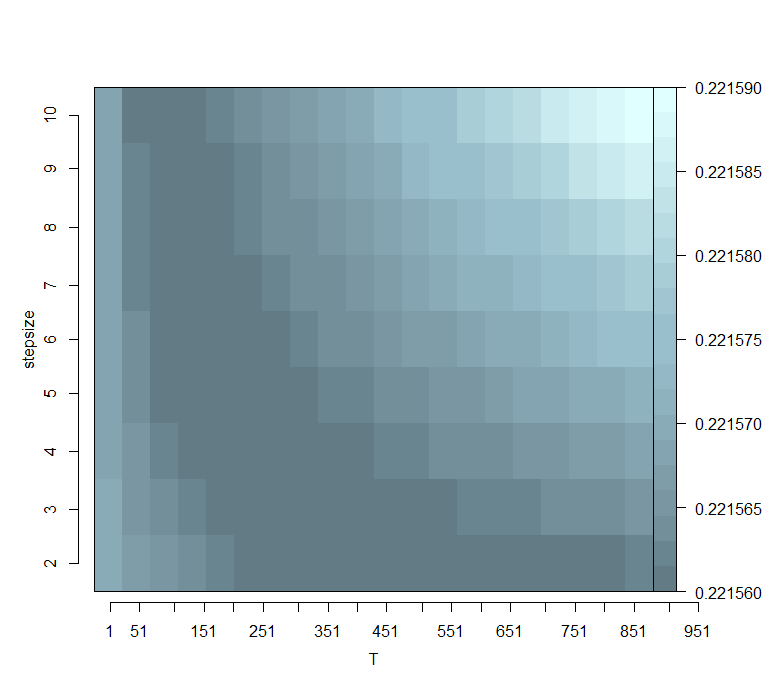}
  \caption{
    Left:  The gradient path  becomes unbounded for a growing number of iterations.   
    Right: Excess risk for logistic loss as a function of  $T$ and
           $\gamma$ for averaged GD.
  }
  \label{fig:plots}
\end{figure}%}}}

%}}} section Numerics (end) 

%%%%%%%%%%%%%%%%%%%%%%%%%%%%%%%%%%%%%%%%%%%%%%%%%%%%%%%%%%%%%%%%%%%%%%%%%%%%%%%%
%%% Section: Conclusion
%%%%%%%%%%%%%%%%%%%%%%%%%%%%%%%%%%%%%%%%%%%%%%%%%%%%%%%%%%%%%%%%%%%%%%%%%%%%%%%%
\section{Conclusion}%{{{
\label{sec_Conclusion}

In this paper, we studied implicit/iterative regularization for linear, possibly infinite dimensional, 
linear models, where the error cost is a convex, differentiable loss function. Our main contribution is a 
sharp high probability bound on the excess risk of the averaged and last iterate of batch gradient descent. 
We derive these results combining ideas and results from optimization and statistics. Indeed, we show how it is 
possible to leverage results from inexact optimization together with concentration inequalities for vector valued functions. 
% and in particular to 
%derive gradient concentration results. 
The theoretical results are illustrated to see how the step size and the iteration number control the bias and the stability of the solution. \\
A number of research directions can further be developed.
% and indeed we highlight the main elements in the proof to suggest further developments. 
In our study we favored a simple analysis to illustrate the main ideas, and as a consequence our results are limited to a basic setting. Indeed, it would be interesting to develop the analysis we presented to get faster learning rates under further assumptions, 
for example considering capacity conditions or even finite dimensional models. Another possible research direction is to  consider less 
regular loss functions, in particular dropping the differentiability assumption. Along similar lines it would be interesting to consider other form of implicit bias or non linear models.
Finally, it would be interesting to consider other forms of optimization including stochastic and accelerated methods.

%}}} section Conclusion (end) 

%%%%%%%%%%%%%%%%%%%%%%%%%%%%%%%%%%%%%%%%%%%%%%%%%%%%%%%%%%%%%%%%%%%%%%%%%%%%%%%%

\subsubsection*{Acknowledgements} 
The authors would like to thank Silvia Villa and Francesco Orabona for useful discussions. 

The research of B.S. has been partially funded by the Deutsche
Forschungsgemeinschaft (DFG)- Project-ID 318763901 - SFB1294.

N.M. acknowledges funding by the Deutsche Forschungsgemeinschaft (DFG)
under Excellence Strategy \emph{The Berlin Mathematics Research Center MATH+} 
(EXC-2046/1, project ID:390685689).

L.R. acknowledges support from  the Center for Brains, Minds and Machines (CBMM), funded by NSF STC award CCF-1231216.  L.R. also acknowledges the financial support of the European Research Council (grant SLING 819789), the AFOSR projects FA9550-18-1-7009, FA9550-17-1-0390 and BAA-AFRL-AFOSR-2016-0007 (European Office of Aerospace Research and Development), and the EU H2020-MSCA-RISE project NoMADS - DLV-777826.

%%%%%%%%%%%%%%%%%%%%%% REFERENCES %%%%%%%%%%%%%%%%%%%%%%%%%%%%%%%%%%%%%%%%%%%%%%

%\acks{Test}
%\bibliographystyle{plain}
%\bibliographystyle{abbrv}
\bibliographystyle{alpha}
\bibliography{bib_SGD}

%%%%%%%%%%%%%%%%%%%% APPENDIX %%%%%%%%%%%%%%%%%%%%%%%%%%%%%%%%%%%%%%%%%%%%%%%%%%

\newpage

\appendix

%%%%%%%%%%%%%%%%%%%%%%%%%%%%%%%%%%%%%%%%%%%%%%%%%%%%%%%%%%%%%%%%%%%%%%%%%%%%%%%%
%%%%%%%%%% APPENDIX A 
%%%%%%%%%%%%%%%%%%%%%%%%%%%%%%%%%%%%%%%%%%%%%%%%%%%%%%%%%%%%%%%%%%%%%%%%%%%%%%%%
\section{Appendix: Proofs for Section \ref{sec_MainResultsAndDiscussion} }%(Main results)}%{{{
\label{app_ProofsForMainResults}

\begin{lemma}[Properties of the risks]%{{{
  \label{lem_PropertiesOfTheRisks}
  \
  \begin{enumerate}

    \item [(i)] Under \hyperref[ass_Convex]{\textbf{{\color{red} (Conv)}}}, 
      the population risk convex, i.e., for all $ v, w \in \mathcal{H} $ we have 
      \begin{align}
        \label{eq:convexity3}
        \mathcal{L}(v) 
        \le 
        \mathcal{L}(w)
        - 
        \langle \nabla \mathcal{L}(v), w - v \rangle. 
      \end{align}

    \item [(ii)]
      Under 
      \hyperref[ass_Bounded]{\textbf{{\color{red} (Bound)}}}
      and
      \hyperref[ass_Lipschitz]{\textbf{{\color{red} (Lip)}}},
      the population  risk is Lipschitz-continuous with
      constant $ \kappa L $, i.e.,  for all $v, w \in \cH$ we have
      \begin{align}
        \label{eq:lip3}
        | \mathcal{L}( v ) - \mathcal{L}( w ) | 
        \le 
        \kappa L \| v - w \| 
      \end{align}

    \item [(iii)]
      Under 
      \hyperref[ass_Bounded]{\textbf{{\color{red} (Bound})}}
      and
      \hyperref[ass_Smooth]{\textbf{{\color{red} (Smooth)}}},
      the gradient of the population  risk is Lipschitz-continuous with constant
      $ \kappa^{2}  M $, i.e., for all $ v, w \in \mathcal{H} $ we have 

      \begin{align}
        \label{eq:smooth3-1}
        \| 
          \nabla \mathcal{L}( v ) - \nabla \mathcal{L}( w )
        \| 
        & \le 
        \kappa^{2} M \| v - w \| 
      \end{align}
      Note that this implies that  
      \begin{align}
        \label{eq:smooth3-2}
        \mathcal{L}( w ) 
        & \le 
        \mathcal{L}( v )
        + 
        \langle \nabla \mathcal{L}( v ), w - v \rangle 
        + 
        \frac{ \kappa^{2} M }{ 2 } 
        \| w - v \|^{2}. 
      \end{align}

  \end{enumerate}
  Moreover, $(i), (ii)$ and $(iii)$ also hold for the empirical risk 
  $ \widehat{ \mathcal{L} } $ with the same constants. 
\end{lemma}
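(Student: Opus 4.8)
The plan is to prove each property first pointwise for the integrand $w \mapsto \ell(y, \langle x, w \rangle)$ at a fixed data point $(x,y)$ with $\| x \| \le \kappa$, and then to lift it to $\mathcal{L}$ by integrating against $\mathbb{P}_{(X,Y)}$, using monotonicity of the integral together with the triangle inequality for (Bochner) integrals. Since the empirical risk $\widehat{\mathcal{L}}$ is exactly the expectation with respect to the empirical measure $\frac{1}{n} \sum_{j=1}^{n} \delta_{(X_j, Y_j)}$, and $\| X_j \| \le \kappa$ almost surely by \hyperref[ass_Bounded]{\textbf{{\color{red} (Bound)}}}, the very same computations produce the same constants for $\widehat{\mathcal{L}}$; I would therefore carry out the argument once for a generic distribution and remark that it transfers verbatim.

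For (i), I would start from the first-order convexity inequality for the scalar convex map $\ell(y, \cdot)$, namely $\ell(y, b) \ge \ell(y, a) + \ell'(y, a)(b - a)$, and substitute $a = \langle x, v \rangle$ and $b = \langle x, w \rangle$. Integrating against $\mathbb{P}_{(X,Y)}$ and using the identity $\nabla \mathcal{L}(v) = \int \ell'(y, \langle x, v \rangle)\, x \, \mathbb{P}_{(X,Y)}(d(x,y))$, already justified in the main text via dominated convergence, yields $\mathcal{L}(w) \ge \mathcal{L}(v) + \langle \nabla \mathcal{L}(v), w - v \rangle$, which is exactly \eqref{eq:convexity3}.

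For (ii), the Lipschitz assumption together with Cauchy--Schwarz gives, pointwise, $| \ell(y, \langle x, v \rangle) - \ell(y, \langle x, w \rangle) | \le L | \langle x, v - w \rangle | \le \kappa L \| v - w \|$; pulling the absolute value inside the integral then produces \eqref{eq:lip3}. For (iii), I would apply \hyperref[ass_Smooth]{\textbf{{\color{red} (Smooth)}}} in the same manner to get $| \ell'(y, \langle x, v \rangle) - \ell'(y, \langle x, w \rangle) | \le \kappa M \| v - w \|$, multiply by $\| x \| \le \kappa$, and bound $\| \nabla \mathcal{L}(v) - \nabla \mathcal{L}(w) \| = \big\| \int ( \ell'(y, \langle x, v \rangle) - \ell'(y, \langle x, w \rangle) )\, x \, d\mathbb{P} \big\|$ by $\int | \ell'(y, \langle x, v \rangle) - \ell'(y, \langle x, w \rangle) |\, \| x \| \, d\mathbb{P} \le \kappa^{2} M \| v - w \|$, which is \eqref{eq:smooth3-1}. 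The quadratic bound \eqref{eq:smooth3-2} then follows either from the standard descent-lemma identity $\mathcal{L}(w) - \mathcal{L}(v) = \int_0^1 \langle \nabla \mathcal{L}(v + s(w-v)), w - v \rangle \, ds$ combined with \eqref{eq:smooth3-1}, or, more directly, by integrating the pointwise quadratic estimate \eqref{eq_MSmoothnessAsQuadraticBound} after noting $| \langle x, w - v \rangle |^{2} \le \kappa^{2} \| w - v \|^{2}$.

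I expect no serious obstacle, as the whole argument is a routine lift from the scalar assumptions on $\ell$ to the risk functionals. The only two points requiring genuine care are the interchange of gradient and integral (handled in the main text by dominated convergence under \hyperref[ass_Bounded]{\textbf{{\color{red} (Bound)}}} and \hyperref[ass_Lipschitz]{\textbf{{\color{red} (Lip)}}}) and, in part (iii), the norm inequality $\| \int g \, d\mathbb{P} \| \le \int \| g \| \, d\mathbb{P}$ for the $\mathcal{H}$-valued integrand, where separability of $\mathcal{H}$ together with the boundedness supplied by \hyperref[ass_Bounded]{\textbf{{\color{red} (Bound)}}} and \hyperref[ass_Lipschitz]{\textbf{{\color{red} (Lip)}}} ensures Bochner integrability.
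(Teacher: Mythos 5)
Your proposal is correct and follows essentially the same route as the paper's own proof: lift the scalar assumptions on $\ell$ pointwise to the risks via Cauchy--Schwarz, \hyperref[ass_Bounded]{\textbf{{\color{red} (Bound)}}} and the triangle inequality for (Bochner) integrals, and observe that the identical computation applied to the empirical measure gives the same constants for $\widehat{\mathcal{L}}$. If anything, your write-up is slightly more careful than the paper's, which pulls the norm outside the expectation in part (iii) somewhat informally and defers the quadratic bound \eqref{eq:smooth3-2} to a citation rather than deriving it as you do.
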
%}}}

\begin{proof}[Proof]%{{{
  \
  \begin{enumerate}
    \item [(i)] This follows directly from
      \hyperref[ass_Convex]{\textbf{{\color{red} (Conv)}}}
      and the linearity of the expectation.

    \item [(ii)]
      For $ v, w \in \mathcal{H} $, we have
      \begin{align}
            | \mathcal{L}(w) - \mathcal{L}(v) | 
        & 
        = 
            | 
              \mathbb{E}_{( X, Y )} [ \ell(Y, \langle X, w \rangle) ] 
            - \mathbb{E}_{( X, Y )} [ \ell(Y, \langle X, v \rangle) ] 
            | 
        \\ \notag
        & 
        \le 
            L \mathbb{E}_{( X, Y )} [ \| X \| \| w - v \| ] 
        \le 
            \kappa L \| w - v \|,
      \end{align}
      where the first inequality follows from
      \hyperref[ass_Lipschitz]{\textbf{{\color{red} (Lip)}}}
      and Cauchy-Schwarz inequality and the second inequality follows from 
      \hyperref[ass_Bounded]{\textbf{{\color{red} (Bound})}}.

    \item [(iii)]
      For $ v, w \in \mathcal{H} $, we have
      \begin{align}
            \| \nabla L(w) - \nabla \mathcal{L}(v) \| 
        & 
        = 
            \| 
              \mathbb{E}_{( X, Y )} [ \ell'(Y, \langle X, w \rangle) X ] 
            - \mathbb{E}_{( X, Y )} [ \ell'(Y, \langle X, v \rangle) X ] 
            \| 
        \\ \notag 
        &
        \le
            \kappa 
            | 
              \mathbb{E}_{( X, Y )} [ \ell'(Y, \langle X, w \rangle) ] 
            - \mathbb{E}_{( X, Y )} [ \ell'(Y, \langle X, v \rangle) ] 
            |
        \\ \notag 
        &
        \le 
            \kappa M
            | \mathbb{E}_{( X, Y )} [ \langle X, w - v \rangle ] |
        \le 
            \kappa^{2} M \| w - v \|,
      \end{align}
      where the first and the third inequality follow from 
      \hyperref[ass_Bounded]{\textbf{{\color{red} (Bound})}} and Cauchy-Schwarz
      inequality and the second one follows from 
      \hyperref[ass_Smooth]{\textbf{{\color{red} (Smooth)}}}.
  \end{enumerate}
\end{proof}%}}}

For the proof of the second part of Proposition
\ref{prp_DecompositionOfTheExcessRisk} we need the following simple Lemma. 
A different version of this was put forward in a blog post by Francesco Orabona
with a reference to the convergence proof of the last iterate of SGD in
\cite{LinEtal2016Iterative}. 
Since our version is different and for the sake of completeness, we give a full
proof.

\begin{lemma} 
  \label{lem:omg-so-simple}
  Let $ (q_t)_{ t = 1,...,T } $ be a sequence of real numbers.
  Then, 
  \begin{align*}
          q_{T} 
    & = 
            \frac{1}{T} \sum_{ t = 1 }^{T} q_{t} 
          + 
            \sum_{ t = 1 }^{ T - 1 } 
            \frac{1}{ t ( t + 1 ) } 
            \sum_{ s = T - t + 1 }^{T} 
            ( q_{s} - q_{ T - t } ). 
  \end{align*}
\end{lemma}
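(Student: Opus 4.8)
The plan is to read the right-hand side as a telescoping sum over averages of trailing blocks of the sequence. For $k = 1, \dots, T$ I would introduce the average of the final $k$ entries,
\[
  M_k := \frac{1}{k} \sum_{s = T - k + 1}^{T} q_s,
\]
so that the two extreme cases are $M_1 = q_T$ (the last term) and $M_T = \frac{1}{T} \sum_{t = 1}^{T} q_t$ (the full average). With this notation the claimed identity is precisely $M_1 - M_T = \sum_{t = 1}^{T - 1} \frac{1}{t(t+1)} \sum_{s = T - t + 1}^{T} (q_s - q_{T - t})$, and since $\sum_{t = 1}^{T - 1} (M_t - M_{t+1}) = M_1 - M_T$ telescopes trivially, the whole statement reduces to matching each summand on the right with the increment $M_t - M_{t+1}$.

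The key step is therefore to verify that, for every $t = 1, \dots, T - 1$,
\[
  M_t - M_{t+1} = \frac{1}{t(t+1)} \sum_{s = T - t + 1}^{T} (q_s - q_{T - t}).
\]
Writing $B_t := \sum_{s = T - t + 1}^{T} q_s = t\, M_t$ for the sum of the last $t$ entries and using the elementary recursion $B_{t+1} = B_t + q_{T - t}$, I would compute
\[
  M_t - M_{t+1} = \frac{B_t}{t} - \frac{B_t + q_{T - t}}{t + 1} = \frac{B_t}{t(t+1)} - \frac{q_{T - t}}{t + 1}.
\]
Rewriting $\frac{q_{T - t}}{t + 1} = \frac{t\, q_{T - t}}{t(t+1)}$ and recalling $B_t = \sum_{s = T - t + 1}^{T} q_s$ collapses this into $\frac{1}{t(t+1)} \bigl( \sum_{s = T - t + 1}^{T} q_s - t\, q_{T - t} \bigr)$, which is exactly $\frac{1}{t(t+1)} \sum_{s = T - t + 1}^{T} (q_s - q_{T - t})$ as required. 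Summing this identity over $t = 1, \dots, T - 1$ and invoking the telescoping collapse yields $q_T - \frac{1}{T} \sum_{t = 1}^{T} q_t = \sum_{t = 1}^{T - 1} \frac{1}{t(t+1)} \sum_{s = T - t + 1}^{T} (q_s - q_{T - t})$, which rearranges to the statement.

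The computation is entirely elementary, so I do not expect a genuine obstacle; the only point demanding care is index bookkeeping. Specifically, one must confirm that the $t$-th summand corresponds to the step from the average over the last $t + 1$ terms down to the average over the last $t$ terms, that the telescope begins at $t = 1$ (producing the boundary value $M_1 = q_T$) and ends at $t = T - 1$ (leaving $M_T$, the averaging term), and that the split $B_{t+1} = B_t + q_{T-t}$ correctly isolates the single entry $q_{T - t}$. Once these indices are aligned, the proof is immediate.
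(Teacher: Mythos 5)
Your proof is correct and is essentially the same as the paper's: your trailing-block averages $M_k$ are exactly the quantities $S_t = \frac{1}{t}\sum_{s=T-t+1}^{T} q_s$ used in the paper, and your telescoping of $M_t - M_{t+1}$ is the paper's one-step recursion $S_t = S_{t+1} + \frac{1}{t(t+1)}\sum_{s=T-t}^{T}(q_s - q_{T-t})$ applied inductively. The only cosmetic difference is that you verify the increment identity via the partial sums $B_t$ rather than by manipulating $tS_t = (t+1)S_{t+1} - q_{T-t}$ directly, which is the same algebra.
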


\begin{proof}[Proof]%{{{
  Define
  \begin{align}
    S_{t}: 
    = 
    \frac{1}{t} \sum_{ s = T - t + 1 }^{T} q_{s}, 
    \qquad t = 1, \dots, T.
  \end{align}
  Then, any $ t \le T - 1 $ satisfies
  \begin{align}
            t S_{t} 
    & = 
            ( t + 1 ) S_{ t + 1 } - q_{ T - t } 
    = 
            t S_{ t + 1 } + S_{ t + 1 } - q_{ T - t } 
          \\
    & = 
            t S_{ t + 1 } 
          + 
            \frac{1}{ t + 1 } 
            \sum_{ s = T - t }^{ T } 
            ( q_{s} - q_{ T - t } ), 
          \notag
  \end{align}
  which implies
  \begin{align}
    \label{eq_InductionStep}
            S_{t} 
    & = 
            S_{ t + 1 } 
          + 
            \frac{1}{ t ( t + 1 ) } 
            \sum_{ s = T - t }^{ T } 
            ( q_{s} - q_{ T - t } ). 
  \end{align}
  Inductively applying \eqref{eq_InductionStep}, we obtain
  \begin{align}
            q_{T} 
    = 
            S_{1} 
    & = 
            S_{T} 
          + 
            \sum_{ t = 1 }^{ T - 1 } 
            \frac{1}{ t ( t + 1 ) } 
            \sum_{ s = T - t }^{ T } 
            ( q_{s} - q_{ T - t } ). 
          \\
    & = 
            \frac{1}{T} \sum_{ t = 1 }^{T} q_{t} 
          + 
            \sum_{ t = 1 }^{ T - 1 } 
            \frac{1}{ t ( t + 1 ) } 
            \sum_{ s = T - t + 1 }^{ T } 
            ( q_{s} - q_{ T - t } ). 
            \notag
  \end{align}
\end{proof}%}}}

\begin{proof}[\textbf{Proof of Proposition} \ref{prp_DecompositionOfTheExcessRisk} (Decomposition of the excess risk)]%{{{
  \label{prf_DecompositionOfTheExcessRisk}
  \
  \begin{enumerate}
    \item [(i)] From \hyperref[ass_RConvex]{{\color{red} \textbf{(R-Conv)}}} and
      Lemma \ref{lem_InexactGradientDescentRisk}, we obtain
      \begin{align}
        & \ \ \ \
        \frac{1}{T} \sum_{t = 1}^{T}
        \mathcal{L}(v_{t}) - \mathcal{L}(w)
        \\ \notag
        &
        \le 
            \frac{1}{T} \sum_{t = 1}^{T}
              \frac{\| v_{t - 1} - w \|^{2} - \| v_{t} - w \|^{2}}{2 \gamma}
          + 
            \frac{1}{T} \sum_{t = 1}^{T}
              \langle 
                \nabla \mathcal{L}( v_{t - 1} ) 
              - 
                \nabla \widehat{\mathcal{L}}( v_{t - 1} ), 
                v_{t} - w
              \rangle
        \\ \notag
        &
        \le 
            \frac{\| v_{0} - w \|^{2}}{2 \gamma T}
          + 
            \frac{1}{T} \sum_{t = 1}^{T}
              \langle 
                \nabla \mathcal{L}(v_{t - 1}) 
              - 
                \nabla \widehat{\mathcal{L}}(v_{t - 1}), 
                v_{t} - w
              \rangle, 
      \end{align}
      where have resolved the telescopic sum, to obtain the last inequality. 

    \item[(ii)] Applying Lemma \ref{lem:omg-so-simple} with $ 
        q_{t} 
         = 
        \mathcal{L}( v_{t} ) - \mathcal{L}(w)
      $, we find  
      \begin{align}
        \label{eq:ex2}
            \mathcal{L}( v_{T} ) - \mathcal{L}(w) 
        & = 
              \frac{1}{T} 
              \sum_{ t = 1 }^{T} 
              ( \mathcal{L}( v_{t} ) - \mathcal{L}(w) ) 
            + 
              \sum_{ t = 1 }^{ T - 1 } 
              \frac{1}{ t ( t + 1 ) } 
              \sum_{ s = T - t + 1 }^{T} 
              ( \mathcal{L}( v_{s} ) - \mathcal{L}( v_{ T - t } ) ). 
      \end{align}
      We aim at bounding the last sum in the above equality. Summing the bound
      in Lemma \ref{lem_InexactGradientDescentRisk} from $T-t+1$ to T yields for
      all $ v \in \mathcal{H} $. 
      \begin{align}
        \sum_{ s = T - t + 1 }^{T} 
        \mathcal{L}( v_{s} ) - \mathcal{L}(v) 
        & \le 
        \frac{1}{ 2 \gamma } 
        \| v_{ T - t } - v \|^{2} 
        + 
        \sum_{ s = T - t + 1 }^{T} 
        \langle 
          \nabla \mathcal{L}( v_{ s - 1 } ) 
          - 
          \nabla \widehat{ \mathcal{L} }_{ s - 1 }, 
          v_{s} - v
        \rangle.
      \end{align}
      Hence, setting $ v = v_{T-t} $ yields 
      \begin{align}
        \sum_{ s = T - t + 1 }^{ T } 
        ( \mathcal{L}( v_{ s } ) - \mathcal{L}( v_{ T - t } ) ) 
        & \le 
        \sum_{ s = T - t + 1 }^{ T } 
        \langle 
          \nabla \mathcal{L}( v_{ s - 1 } ) 
        - \nabla \widehat{ \mathcal{L} }( v_{ s - 1 } ), 
          v_{ s } - v
        \rangle. 
      \end{align}
      The result follows by plugging the last inequality into \eqref{eq:ex2}. 
  \end{enumerate}
\end{proof}%}}}

\begin{proof}[\textbf{Proof of Theorem} \ref{thm_ExcessRisk} (Excess risk)]%{{{
  \label{prf_ExcessRisk} 
  We initially consider the case of the averaged GD-iterate.
  By convexity, Proposition \ref{prp_DecompositionOfTheExcessRisk} and an application
  of Cauchy-Schwarz inequality, we have
  \begin{align}
    \label{eq_ExcessRisk_Decomposition}
        \mathcal{L}( \overline{v}_{T} ) - \mathcal{L}( w_{*} ) 
    & 
    \le \frac{1}{T} \sum_{t=1}^T \mathcal{L}( v_t ) - \mathcal{L}(w) \\ \notag 
    &
    \le
        \frac{ \| w_{*} \|^{2} }
             { 2 \gamma T } 
      + \frac{1}{T} \sum_{t = 1}^{T}  
        \langle 
          \nabla \mathcal{L}( v_{t} ) 
        - \nabla \widehat{ \mathcal{L} }( v_{t} ),
          v_{t} - w_{*} 
        \rangle 
    \\ \notag 
    & 
    \le
        \frac{ \| w_{*} \|^{2} }{ 2 \gamma T } 
      + \frac{1}{T} \sum_{t = 1}^{T} 
        \| 
          \nabla \mathcal{L}( v_{t} ) 
        - \nabla \widehat{ \mathcal{L} }( v_{t} )
        \| 
        \| v_{t} - w_{*} \|. 
  \end{align}
  The assumptions of Theorem \ref{thm_ExcessRisk} are chosen exactly as in
  Proposition \ref{prp_BoundedGradientPath}. 
  Therefore, on the gradient concentration event with probability at least $ 1 -
  \delta $ from Proposition \ref{prp_GradientConcentration} and the choice $ R $
  as above, we have
  \begin{align}
        \| v_{t} - w_{*} \| \le \frac{2 T}{3}, 
    \ \ 
        \| v_{t} \| \le R 
    \ \ \text{ and } \ \
        \| 
            \nabla \mathcal{L}( v_{t} ) 
          - \nabla \widehat{ \mathcal{L} }( v_{t} ) 
        \| 
    \le 
        20 \kappa^{2} R ( L + M ) 
        \sqrt{ \frac{ \log(4 / \delta) }{n} }
    \\ \notag 
    \qquad \text{ for all } t = 0, 1, \dots, T,
  \end{align}
  where the last inequality is derived in exactly the same way as in
  the proof of Proposition \ref{prp_BoundedGradientPath}. 
  Plugging this into the inequality in Equation
  \eqref{eq_ExcessRisk_Decomposition}, we obtain
  \begin{align}
        \mathcal{L}( \overline{v}_{T} ) - \mathcal{L}( w_{*} ) 
    & 
    \le 
        \frac{\| w_{*} \|^{2}}{2 \gamma T} 
      + 20 \kappa^{2} R^{2} ( M + L ) 
        \sqrt{ \frac{ \log(4 / \delta) }{n} }. 
    \\ \notag 
    & 
    \le 
        \frac{ \| w_{*} \|^{2} } { 2 \gamma T } 
      + 180 \max \{ 1, \| w_{*} \|^{2} \} 
        \kappa^{2} ( M + L )
        \sqrt{ \frac{\log(4 / \delta) }{n} }. 
  \end{align}

  For the last iterate, we set $ 
        e_{t}: 
    =
        \nabla \widehat{ \mathcal{L} }( v_{t} ) 
      - \widehat{ \mathcal{L} }( v_{t} )
  $, $ t = 1, \dots, T $ to reduce the notation.
  Proposition \ref{prp_DecompositionOfTheExcessRisk} with an application of
  Cauchy-Schwarz yields
  \begin{align}
            \mathcal{L}( v_{T} ) - \mathcal{L}( w_{*} ) 
    & \le 
            \frac{1}{T} 
            \sum_{ t = 1 }^{T} 
            ( \mathcal{L}( v_{t} ) - \mathcal{L}( w_{*} ) ) 
          + 
            \sum_{ t = 1 }^{ T - 1 } 
            \frac{1}{ t ( t + 1 ) } 
            \sum_{ s = T - t + 1 }^{T} 
            \langle 
              - e_{ s - 1 }, v_{s} - v_{ T - t }
            \rangle 
            \\
    & \le 
            \frac{ \| w_{*} \|^{2} }{ 2 \gamma T } 
          + 
            \frac{1}{T} 
            \sum_{ t = 1 }^{T} 
            \langle - e_{ s - 1 }, v_{s} - v_{ T - t } \rangle 
            \notag 
            \\
          & \qquad \qquad \qquad \ + 
            \sum_{ t = 1 }^{ T - 1 } 
            \frac{1}{ t ( t + 1 ) } 
            \sum_{ s = T - t + 1 }^{T} 
            \langle 
              - e_{ s - 1 }, v_{s} - v_{ T - t }
            \rangle 
            \notag
            \\
    & \le 
            \frac{ \| w_{*} \|^{2} }{ 2 \gamma T } 
            + 
            \frac{1}{T} 
            \sum_{ t = 1 }^{T} 
            \| e_{s} \| \| v_{s} - v_{*} \| 
            \notag 
            \\
    & \qquad \qquad \qquad \ + 
            \sum_{ t = 1 }^{ T - 1 } 
            \frac{1}{ t ( t + 1 ) } 
            \sum_{ s = T - t + 1 }^{T} 
            \| e_{s} \| \| v_{s} - v_{ T - t } \|. 
            \notag
  \end{align}

  Now, by Proposition \ref{prp_GradientConcentration} and Proposition
  \ref{prp_BoundedGradientPath}, if 
  \begin{align}
    \sqrt{n} 
    & \ge 
    90 \gamma T \kappa^{2}
    ( 1 + \kappa L ) ( M + L ) 
    \sqrt{ \log( 4 / \delta ) } 
  \end{align}
  we find with probability at least $ 1 - \delta $ for all 
  $ t = 0, \dots, T $, that
  \begin{align}
    \| w_{*} \| \le \frac{ 2 R }{3} \le R, 
    \qquad 
    \| v_{t} \| \le R, 
  \end{align}
  \begin{align}
            \| e_{t} \| 
    & \le 
            \sup_{ v \in \mathcal{F}_{R} } 
            \| 
              \nabla \mathcal{L}(v) 
              -
              \nabla \widehat{ \mathcal{L} }(v)
            \| 
    \le 
            20 \kappa^{2} R ( L + M ) 
            \sqrt{ \frac{ \log( 4 / \delta ) }{n} }. 
  \end{align}
  In particular, 
  \begin{align}
    \| v_{s} - v_{ T - t } \| \le \frac{4 R}{3} 
  \end{align}
  for any $ 
    s = T - t + 1, \dots T, 
    t = 1, \dots, T
  $. 
  Hence, with probability at least $ 1 - \delta $, 
  \begin{align}
            \mathcal{L}( v_{T} ) - \mathcal{L}( w_{*} ) 
    & \le 
            \frac{ \| w_{*} \|^{2} }{ 2 \gamma T } 
          + 
            20 \kappa^{2} R^{2} ( L + M ) 
            \sqrt{ \frac{ \log( 4 / \delta ) }{n} } 
            \notag 
            \\
          & \qquad \qquad \qquad + 
            \frac{ 80 }{3} 
            \kappa^{2} R^{2} ( L + M ) 
            \sqrt{ \frac{ \log( 4 / \delta ) }{n} } 
            \Big( \sum_{ t = 1 }^{T} \frac{1}{t} \Big).
            \notag 
  \end{align}
  Finally, since 
  \begin{align}
          \sum_{ t = 1 }^{ T - 1 } \frac{1}{t} 
    & \le 
          \log( T - 1 ) 
    \le 
          \log T,
  \end{align}
  we arrive at 
  \begin{align}
            \mathcal{L}( v_{ T } ) - \mathcal{L}( w_{ * } )
    & \le 
            \frac{ \| w_{ * } \|^{2} }{ 2 \gamma T } 
          + 
            \frac{140}{3} 
            \kappa^{2} R^{2} ( L + M ) 
            \sqrt{ \frac{ \log( 4 / \delta ) }{n} } 
            \log T.
  \end{align}
  Plugging in for $ R^{2} $ and simplifying the constant completes the proof.
\end{proof}%}}}

%}}} section Appendix: Proofs for "Main results" (end) 

%%%%%%%%%%%%%%%%%%%%%%%%%%%%%%%%%%%%%%%%%%%%%%%%%%%%%%%%%%%%%%%%%%%%%%%%%%%%%%%%
%%%%%%%%%% APPENDIX B 
%%%%%%%%%%%%%%%%%%%%%%%%%%%%%%%%%%%%%%%%%%%%%%%%%%%%%%%%%%%%%%%%%%%%%%%%%%%%%%%%
\section{Appendix: Proofs for Section \ref{sec_FromInexactOptimisationToLearning}}% (From inexact optimization to learning)}%{{{
\label{app_ProofsForFromInexactOptimisationToLearning}

\begin{proof}[\textbf{Proof of Lemma} \ref{lem_InexactGradientDescentRisk} (Inexact gradient descent:\,Risk)]%{{{
  \label{prf_InexactGradientDescentRisk}
  By 
  \hyperref[ass_RConvex]{{\color{red} \textbf{(R-Conv)}}} (equation \eqref{eq:convexity3})
  and 
  \hyperref[ass_RSmooth]{{\color{red} \textbf{(R-Smooth)}}} (equation \eqref{eq:smooth3-2}), 
  the population risk is convex and $ \kappa^{2} M $-smooth.
  We have
  \begin{align}
        \mathcal{L}(v_{t})
    &
    \le 
        \mathcal{L}(v_{t - 1})
      + 
        \langle \nabla \mathcal{L}(v_{t - 1}), v_{t} - v_{t - 1} \rangle
      + 
        \frac{\kappa^{2} M}{2} \| v_{t} - v_{t - 1} \|^{2}
    \\ \notag
    &
    \le 
        \mathcal{L}(w)
      + 
        \langle \nabla \mathcal{L}(v_{t - 1}), v_{t - 1} - w \rangle
      + 
        \langle \nabla \mathcal{L}(v_{t - 1}), v_{t} - v_{t - 1} \rangle
      + 
        \frac{\kappa^{2} M}{2} \| v_{t} - v_{t - 1} \|^{2}
    \\ \notag
    &
    \le 
        \mathcal{L}(w)
      +   
        \langle \nabla \mathcal{L}(v_{t - 1}), v_{t} - w \rangle
      +   
        \frac{1}{2\gamma } \| v_{t} - v_{t - 1} \|^{2},
  \end{align}
  where the last inequality uses from the fact that 
  $
    \gamma \le 1 / ( \kappa^{2} M ) 
  $.
  The statement now follows from
  \begin{align}
        \| v_{t} - v_{t - 1} \|^{2}
    = 
        \| v_{t - 1} - w \|^{2} - \| v_{t} - w \|^{2}
      - 2 \gamma
        \langle
          \nabla \widehat{\mathcal{L}}(v_{t - 1}), v_{t} - w
        \rangle.
  \end{align}
\end{proof}%}}}

\begin{proof}[\textbf{Proof of Lemma} \ref{lem_InexactGradientDescentGradientPath} (Inexact gradient descent:\,Gradient path)]%{{{
  \label{prf_InexactGradientDescentGradientPath} 
  For $ v, w \in \mathcal{H} $, Equation (3.6) in \cite{Bubeck2015Optimization}
  together with \hyperref[ass_RSmooth]{{\color{red} \textbf{(R-Smooth)}}} (equation \eqref{eq:smooth3-1})
  yields
  \begin{align}
    \label{eq_InexactGradientDescentGradientPath_BubeckInequality}
    \| \nabla \mathcal{L}(v) - \nabla \mathcal{L}(w) \|^{2} 
    \le 
    \kappa^{2} M 
    \langle 
      v - w, 
      \nabla \mathcal{L}(v) - \nabla \mathcal{L}(w)
    \rangle 
  \end{align}
  In particular, since $ \nabla \mathcal{L}( w_{*} ) = 0 $, we have
  \begin{align}
    \| \nabla \mathcal{L}(v) \|^{2} 
    \le 
    \kappa^{2} M 
    \langle 
      v - w_{*}, \nabla \mathcal{L}(v)
    \rangle 
    \qquad \text{ for all } v \in \mathcal{H}. 
  \end{align}
  Setting 
  $ 
    e_{s}: =  \nabla \widehat{\mathcal{L}}( v_{s} ) 
            - \nabla \mathcal{L}( v_{s} )
  $,
  we obtain that for any $ s \ge 0 $,
  \begin{align}
        \| v_{s + 1} - w_{*} \|^{2} 
    = 
    &
        \| v_{s} - w_{*} \|^{2} 
      - 2 \gamma \langle 
                   \nabla \widehat{ \mathcal{L} }( v_{s} ),
                   v_{s} - w_{*} 
                 \rangle 
      + \gamma^{2} \| \nabla \widehat{ \mathcal{L} }( v_{s} ) \|^{2} 
    \\ \notag 
    = 
    &
        \| v_{s} - w_{*} \|^{2} 
      - 2 \gamma \langle e_{s}, v_{s} - w_{*} \rangle 
    \\ \notag 
    & 
    \underbrace{
      - 2 \gamma \langle 
      \nabla \mathcal{L}( v_{s} ),
      v_{s} - w_{*} 
      \rangle 
      + \gamma^{2} \| \nabla \mathcal{L}( v_{s} ) \|^{2} 
    }_{= (\text{I})}
    \\ \notag 
    & 
    \underbrace{
      + \gamma^{2} \| \nabla \widehat{ \mathcal{L} }( v_{s} ) \|^{2} 
      - \gamma^{2} \| \nabla \mathcal{L}( v_{s} ) \|^{2} 
    }_{= (\text{II})}. 
  \end{align}
  We treat the terms $ (\text{I}) $ and $ (\text{II}) $ separately:
  By Equation \eqref{eq_InexactGradientDescentGradientPath_BubeckInequality} and
  our choice of $ \gamma \le 1 / ( \kappa^{2} M ) $, we have
  \begin{align}
    \label{eq_InexactGradientDescentGradientPath_I}
        (\text{I})
    & 
    = 
        - 2 \gamma \langle 
        \nabla \mathcal{L}( v_{s} ),
        v_{s} - w_{*} 
        \rangle 
        + \gamma^{2} \| \nabla \mathcal{L}( v_{s} ) \|^{2} 
    \\ \notag 
    &
    \le 
        \Big( 
          \frac{- 2 \gamma}{\kappa^{2} M} + \gamma^{2}
        \Big) 
        \| \nabla \mathcal{L}( v_{s} ) \|^{2} 
    \le 0. 
  \end{align}

  Further, by \hyperref[ass_RLipschitz]{\textbf{{\color{red} (R-Lip)}}} (equation \eqref{eq:lip3}),
  Cauchy-Schwarz inequality and the fact that $ \gamma \le 1 $, 
  \begin{align}
    \label{eq_InexactGradientDescentGradientPath_II}
        (\text{II})
    & 
    = 
          \gamma^{2} \| \nabla \widehat{ \mathcal{L} }( v_{s} ) \|^{2} 
        - \gamma^{2} \| \nabla \mathcal{L}( v_{s} ) \|^{2} 
    = 
        \gamma^{2} 
        \langle 
          \nabla \widehat{ \mathcal{L} }( v_{s} ) 
        + \nabla \mathcal{L}( v_{s} ), 
          e_{t} 
        \rangle 
    \\ \notag 
    &
    \le 
        \gamma^{2} \| 
                     \nabla \widehat{ \mathcal{L} }( v_{s} ) 
                   + \nabla \mathcal{L}( v_{s} ) 
                   \| 
                   \| e_{s} \| 
    \le 2 \gamma \kappa L \| e_{s} \|. 
  \end{align}
  Together, Equation \eqref{eq_InexactGradientDescentGradientPath_I} and
  \eqref{eq_InexactGradientDescentGradientPath_II} yield
  \begin{align}
        \| v_{s + 1} - w_{*} \|^{2} - \| v_{s} - w_{*} \|^{2} 
    \le 
        - 2 \gamma \langle 
                     v_{s} - w_{*}, e_{s}
                   \rangle 
        + 2 \gamma \kappa L \| e_{s} \|. 
  \end{align}
  Summing over $ s $ then yields the result.
\end{proof}%}}}

\begin{proof}[\textbf{Proof of Lemma} \ref{lem_BoundsOnTheEmpiricalRademacherComplexities} (Bounds on the empirical Rademacher complexities)]%{{{
  \label{prf_BoundsOnTheEmpiricalRademacherComplexities} 
  The first statement of Lemma
  \ref{lem_BoundsOnTheEmpiricalRademacherComplexities} is a classical result,
  see e.g. \cite{BartlettMendelson2002Complexities}. 

  For the second statement, recall that for any $ w \in \mathcal{H} $, we have 
  $ 
    \| w \| = \sup_{\| v \| = 1} \langle v, w \rangle 
  $, 
  since $ \mathcal{H} $ is assumed to be real.
  Thus, we may write 
  \begin{align}
    \label{eq_RademacherComplexityOfTheCompositeFunctionClass_DotProductFormulationOfTheNorm}
    \widehat{ \mathcal{R} }_{n}( \mathcal{G}_{R} ) 
    & 
    = 
    \mathbb{E}_{\varepsilon} 
    \Big[ 
      \sup_{ \nabla \ell \circ f \in \mathcal{G}_{R} } 
      \Big\| 
        \frac{1}{n} 
        \sum_{j = 1}^{n} 
          \varepsilon_{j} 
          \ell'( Y_{j}, f(X_{j}) ) X_{j}
      \Big\|
    \Big] 
    \\ \notag 
    & 
    = 
    \mathbb{E}_{\varepsilon} 
    \Big[ 
      \sup_{ f \in \mathcal{F}_{R} } 
      \sup_{\| v \| = 1} 
      \frac{1}{n} 
      \sum_{j = 1}^{n} 
        \varepsilon_{j} 
        \ell'( Y_{j}, f(X_{j}) ) 
        \langle X_{j}, v \rangle 
    \Big]. 
  \end{align}
  In order to bound the right-hand side in Equation 
  \eqref{eq_RademacherComplexityOfTheCompositeFunctionClass_DotProductFormulationOfTheNorm},
  we apply Theorem 2 from \cite{Maurer2016Vector}.
  Adopting the notation from this result, we may restrict the supremum above to
  a countable dense subset $ \mathcal{S} $ of 
  $ 
    \mathcal{F}_{R} \times \{ v \in \mathcal{H}: \| v \| \le 1 \}
  $. 
  Note that this is possible, since by \hyperref[ass_Smooth]{{\color{red}
  \textbf{(Smooth)}}}, we have that $ \ell' $ is continuous in the second
  argument.
  Further, set
  \begin{align}
    \psi_{j} & : \mathcal{S} \to \mathbb{R},
    \qquad 
      \psi_{j}(f, v):
    =
      \ell'(Y_{j}, f(X_{j})) 
      \langle X_{j}, v \rangle;
    \\ \notag
    \phi_j^{(1)} & : \mathcal{S} \to \mathbb{R},
    \qquad  
      \phi_j^{(1)}(f, v):
    =
      L  \langle X_{j}, v \rangle;
    \\ \notag
    \phi_j^{(2)} & : \mathcal{S} \to \mathbb{R},
    \qquad
      \phi_j^{(2)}(f, v):
    =
      \kappa \ell'( Y_{j}, f( X_{j} ) ).
    \\ \notag 
    \phi_{j} & : \mathcal{S} \to \mathbb{R}^{2}, 
    \qquad 
      \phi_{j}(f, v) 
    = 
      \big( \phi_{j}^{(1)}(f, v), \phi_{j}^{(2)}(f, v) \big)
  \end{align}
  Then, for any $ j = 1, \dots, n $, and 
  $
    ( f, v ), ( g, w ) \in \mathcal{S} 
  $,
  we use that 
  $
    || \ell' ||_{\infty} \le L 
  $
  by \hyperref[ass_Lipschitz]{{\color{red} \textbf{(Lip)}}},
  $ \| X_{j} \| \le \kappa $ 
  by \hyperref[ass_Bounded]{{\color{red} \textbf{(Bound)}}}
  and $ \| w \| \le 1 $ to obtain
  \begin{align}
    \label{eq_BoundsOnTheEmpiricalRademacherComplexities_MaurerAssumption}
        | \psi_{j}(f, v) - \psi_{j}(g, w) | 
    &
    =
        | 
          \ell'( y_{j}, f( X_{j} ) ) \langle X_{j}, v \rangle
        - \ell'( y_{j}, g( X_{j} ) ) \langle X_{j}, w \rangle
        |  
        \\ \notag
    & 
    \le
        | \ell'(Y_{j}, f(X_{j})) \langle X_{j}, v - w \rangle | 
      + |
          ( \ell'(Y_{j}, f(X_{j})) - \ell'(Y_{j}, g(X_{j})) )
          \langle X_{j}, w \rangle
        | 
        \\ \notag
    &
    \le
        | L \langle X_{j}, v \rangle  - L \langle X_{j}, w \rangle | 
      + | \kappa \ell'(Y_{j}, f(X_{j})) - \kappa \ell'(Y_{j}, g(X_{j})) |
        \\ \notag
    &
    = 
        \| \phi_{j}(f, v) - \phi_{j}(g, w) \|_{1, \mathbb{R}^{2}}
        \\ \notag
    & 
    \le
        2 \| \phi_{j}(f, v) - \phi_{j}(g, w) \|_{2, \mathbb{R}^{2}}, 
  \end{align}
  where $ \| \cdot \|_{p, \mathbb{R}^{2}} $ denotes the $ p $-norm on 
  $ \mathbb{R}^{2} $.
  Equation 
  \eqref{eq_BoundsOnTheEmpiricalRademacherComplexities_MaurerAssumption} 
  shows that Theorem 2 from \cite{Maurer2016Vector} is in fact applicable,
  which yields
  \begin{align}
    \label{eq_RademacherComplexityOfTheCompositeFunctionClass_SplitBound}
        \widehat{\mathcal{R}}( \mathcal{G}_{R} ) 
    & 
    \le 
        \underbrace{ 
          2 \sqrt{2} 
          \mathbb{E}_{\varepsilon} 
          \sup_{f \in \mathcal{F}_{R}}
          \sup_{\| v \| = 1} 
          \frac{1}{n}
          \sum_{j = 1}^{n} 
            \varepsilon_{j}  \phi_j^{(1)}(f , v)
        }_{=: (I)}
    \\ \notag
    & 
      + \underbrace{
          2 \sqrt{2}  
          \mathbb{E}_{\varepsilon} 
           \sup_{f \in \mathcal{F}_{R}}
           \sup_{\| v \| = 1} 
           \frac{1}{n}
           \sum_{j = 1}^{n} \varepsilon_j 
             \phi_j^{(2)}(f , v)
        }_{=: (II)}.
  \end{align}

  We proceed by bounding each term individually. 
  Applying Theorem 7 from \cite{Foster2018Uniform} with $ \beta = 1 $ and 
  $ \Psi(v) = \| v \|^{2} / 2 $ leads to
  \begin{align}
    \label{eq_RademacherComplexityOfTheCompositeFunctionClass_I}
      (I)
    & 
    = 
      2 \sqrt{2} L
      \mathbb{E}_{\varepsilon} 
      \Big[ 
        \sup_{ f \in \mathcal{F}_{R} } 
        \sup_{ \| v \| = 1 } 
        \frac{1}{n} \sum_{j = 1}^{n} 
        \varepsilon_{j} \langle X_{j}, v \rangle 
      \Big] 
    \le 
      2 \sqrt{2} L 
      \mathbb{E}_{\varepsilon} 
      \Big[ 
        \frac{1}{n} \sum_{j = 1}^{n} 
        \varepsilon_{j} \| X_{j} \|
      \Big] 
    \\ \notag 
    &
    \le 
      \frac{ 2 \sqrt{2} L }{n} 
      \sqrt{ \sum_{j = 1}^{n} \| X_{j} \|^{2} } 
    \le 
      \frac{ 2 \sqrt{2} \kappa L }{ \sqrt{n} },
  \end{align}
  where we have used again that $ \| X_{j} \| \le \kappa $.

  For the second summand, Talagrand's contraction principle,
  see e.g. Exercise 6.7.7 in \cite{Vershynin2018HDProb}, 
  together with the fact that by 
  by \hyperref[ass_Smooth]{{\color{red} \textbf{(Smooth)}}},
  $ \ell' $ is $ M $-Lipschitz yields the bound
  \begin{align}
    \label{eq_RademacherComplexityOfTheCompositeFunctionClass_II}
        (II) 
    \le 
        2 \sqrt{2} \kappa M
        \widehat{ \mathcal{R} }( \mathcal{F} ) 
    \le 
        \frac{ 2 \sqrt{2} \kappa^{2} M R }{ \sqrt{n} }, 
  \end{align}
  due to the first part of this Lemma.
  Together, Equation
  \eqref{eq_RademacherComplexityOfTheCompositeFunctionClass_I} and
  \eqref{eq_RademacherComplexityOfTheCompositeFunctionClass_II} yield the
  result.
\end{proof}%}}}

\begin{proof}[\textbf{Proof of Proposition} \ref{prp_GradientConcentration} (Gradient concentration)]%{{{
  \label{prf_GradientConcentration}
  For $ (x, y) \in \mathcal{H} \times \mathcal{Y} $,
  $ f \in \mathcal{F} _{R}  $ denote  
  \begin{align}
    g_{f}(x, y) 
    = ( \nabla \ell \circ f )(x, y) 
    = \ell'(y, f(x)) x. 
  \end{align}
  Then $g_f \in \cG_R$.
  Applying McDiarmid's bounded difference inequality, see e.g. Corollary 2.21 in
  \cite{Wainwright2019HDStats}, we obtain that on an event with probability at
  least $ 1 - \delta $,
  \begin{align}
    \label{eq:grad-diff}
    \sup_{f \in \mathcal{F}_{R}} 
    \| \nabla \mathcal{L}(f) - \nabla \widehat{\mathcal{L}}(f) \| 
    & 
    = \sup_{g \in \mathcal{G}_{R}} \Big\| 
        \mathbb{E} [ g_{f}(X, Y) ] 
      - \frac{1}{n} \sum_{j = 1}^{n} g_{f}(X_{j}, Y_{j})
      \Big\| 
    \\ \notag 
    & 
    \le \mathbb{E} \Big[ 
          \sup_{g \in \mathcal{G}_{R}} \Big\| 
            \mathbb{E} [ g_{f}(X, Y) ] 
          - \frac{1}{n} \sum_{j = 1}^{n} 
            g_{f}(X_{j}, Y_{j})
          \Big\| 
        \Big]
      + G_{R} \sqrt{ \frac{ 2 \log(2 / \delta) }{n} }.
  \end{align} 
  Applying Lemma 4 from \cite{Foster2018Uniform}, we obtain
  \begin{align}
        \mathbb{E}
        \Big[ 
          \sup_{g \in \mathcal{G}_{R}}
          \Big\| 
            \mathbb{E} [ g_{f}(X, Y) ] 
          - \frac{1}{n} \sum_{j = 1}^{n} g_{f}(X_{j}, Y_{j})
          \Big\| 
        \Big]
    \le 
        4 \widehat{ \mathcal{R} }_{n}( \mathcal{G}_{R} ) 
      + 4 G_{R} \frac{\log(2 / \delta)}{n} 
  \end{align}
  on an event with probability at least $ 1 - \delta $. 
  A union bound finally yields
  \begin{align}
        \sup_{f \in \mathcal{F}_{R}} 
        \| \nabla \mathcal{L}(f) - \nabla \widehat{\mathcal{L}}(f) \| 
    \le
        4 \widehat{ \mathcal{R} }_{n}( \mathcal{G}_{R} ) 
      + G_{R} \sqrt{ \frac{ 2 \log(4 / \delta) }{n} }
      + G_{R} \frac{ 4 \log(4 / \delta) }{n}. 
  \end{align}
  on an event of at least probability $ 1 - \delta $. 
\end{proof}%}}}

\begin{proof}[\textbf{Proof of Proposition} \ref{prp_BoundedGradientPath} (Bounded gradient path)]%{{{
  \label{prf_BoundedGradientPath}
  Firstly, note that
  \begin{align}
        \| v_{t} \| 
    \le 
        \| v_{t} - w_{*} \| + \| w_{*} \| 
    \le 
        \| v_{t} - w_{*} \| + \frac{R}{3}. 
  \end{align}
  Therefore, it suffices to prove that 
  $
    \| v_{t} - w_{*} \| \le 2 R / 3 
  $ 
  on the gradient concentration event.
  We proceed via induction over $ t \le T $.
  For $ t = 0 $, this is trivially satisfied, since 
  $
    \| v_{1} - w_{*} \| = \| w_{*} \| \le R / 3 
  $.
  Now, assume that the result is true for 
  $
    s = 0, \dots, t < T
  $.
  From Lemma \ref{lem_InexactGradientDescentGradientPath}, we have
  \begin{align}
    \label{eq_BoundedGradientPath_FundamentalEstimate}
    \notag
        \| v_{t + 1} - w_{*} \|^{2} 
    & \le 
        \| w_{*} \|^{2} 
      + 
        2 \gamma \sum_{s = 0}^{t} 
        \Big( 
          \langle
            \nabla \mathcal{L}( v_{s} ) 
          - \nabla \widehat{ \mathcal{L} }( v_{s} ), 
            v_{s} - w_{*}
          \rangle
        + \kappa L
          \| 
            \nabla \mathcal{L}( v_{s} ) 
          - \nabla \widehat{ \mathcal{L} }( v_{s} ), 
          \| 
        \Big) 
    \\ \notag 
    & \le 
        \frac{ R^{2} }{9} 
      + 
        2 \gamma
        \sum_{s = 0}^{t} 
        ( \| v_{s} - w_{*} \| + \kappa L )
          \|
            \nabla \mathcal{L}( v_{s} ) 
          - \widehat{\mathcal{L}}( v_{s} )
          \| 
    \\
    & \le 
        \frac{ R^{2} }{9} 
      + 
        2 \gamma T 
        \Big( 
          \frac{2 R}{3} + \kappa L
        \Big) 
        \sup_{ v \in \mathcal{F}_{R} } 
        \| 
          \nabla \mathcal{L}(v) 
        - 
          \nabla \widehat{ \mathcal{L} }(v)
        \|,
  \end{align}
  where $ \mathcal{F}_{R} $ is defined in Equation
  \eqref{eq_ScalarFunctionClass}. 

  On the gradient concentration event from Proposition
  \ref{prp_GradientConcentration}, by Lemma
  \ref{lem_BoundsOnTheEmpiricalRademacherComplexities} (ii), we have the bound
  \begin{align}
        \sup_{ v \in \mathcal{F}_{R} } 
        \| 
          \nabla \mathcal{L}(v) 
        - 
          \nabla \widehat{ \mathcal{L} }(v)
        \|
    &
    \le 
        \frac{ 8 \sqrt{2} ( \kappa L + \kappa^{2} M R ) }
             { \sqrt{n} } 
      + 
        G_{R} 
        \sqrt{ \frac{ 2 \log(4 / \delta) }{n} } 
      + 
        G_{R} 
        \frac{ 4 \log(4 / \delta) }{n}, 
  \end{align}
  where 
  $ 
    G_{R} \le \kappa L
  $
  by Equation \eqref{eq_BoundOfTheCompositeFunctionClass}. 
  Equation \eqref{eq_BoundedGradientPath_nLarge} guarantees that
  \begin{align}
      \frac{ \log(4 / \delta) }{n} 
    \le 
      \sqrt{ \frac{\log(4 / \delta) }{n} }
  \end{align}
  and hence with 
  $ 
    1 \le \sqrt{ \log(4 / \delta) }
  $, on the gradient concentration event, we obtain that 
  \begin{align}\label{eq:grad-event}
      \sup_{v \in \mathcal{F}_{R}} 
      \| 
        \nabla \mathcal{L}(v) 
      - 
        \nabla \widehat{ \mathcal{L} }(v)
      \|
    \le 
      20 \kappa^{2} R ( L + M ) 
      \sqrt{ \frac{ \log(4 / \delta) }{n} }.
  \end{align}
  Plugging the last bound into Equation
  \eqref{eq_BoundedGradientPath_FundamentalEstimate} yields
  \begin{align}
        \| v_{t} - w_{*} \|^{2} 
    \le 
        \frac{ R^{2} }{9} 
      + 40 \gamma T 
      \kappa^{2} \Big( \frac{2 R^{2}}{3} + \kappa R L \Big) 
      ( M + L ) 
        \sqrt{ \frac{ \log(4 / \delta) }{n} }.
  \end{align}
  Hence, we obtain our result when the second term above is smaller than 
  $ 4 R ^{2} / 9 $, which is satisfied when
  \begin{align}
      \sqrt{n} 
    \ge 
      90 \gamma T \kappa^{2} ( 1 + \kappa L ) ( M + L ) 
      \sqrt{ \log(4 / \delta) },
  \end{align}
  where we have used the fact that $ R \ge 1 $. 
  This completes the proof.
\end{proof}%}}}

%}}} section Appendix: Proofs for "From inexact optimization to learning" (end) 

\checknbnotes
%\checknbdrafts

\end{document}